\documentclass[twoside]{article}


\usepackage[accepted]{aistats2020}

\setlength{\pdfpageheight}{11in} \setlength{\pdfpagewidth}{8.5in}

\usepackage[utf8]{inputenc} 
\usepackage[T1]{fontenc}    
\usepackage{hyperref}       
\usepackage{xcolor}
\hypersetup{
    colorlinks,
    linkcolor={red!50!black},
    citecolor={blue!50!black},
    urlcolor={blue!80!black}
}

\usepackage{url}            
\usepackage{booktabs}       
\usepackage{amsfonts}       
\usepackage{nicefrac}       
\usepackage{microtype}      
\usepackage{todonotes}

\usepackage{amsmath}









\def\eqref#1{equation~\ref{#1}}









\def\1{\bm{1}}









\def\vg{{\bm{g}}}

\def\vs{{\bm{s}}}

\def\vu{{\bm{u}}}
\def\vv{{\bm{v}}}


\def\mA{{\bm{A}}}
\def\mB{{\bm{B}}}

\def\mE{{\bm{E}}}

\def\mG{{\bm{G}}}

\def\mI{{\bm{I}}}

\def\mM{{\bm{M}}}

\def\mS{{\bm{S}}}

\def\mU{{\bm{U}}}
\def\mV{{\bm{V}}}
\def\mW{{\bm{W}}}

\DeclareMathAlphabet{\mathsfit}{\encodingdefault}{\sfdefault}{m}{sl}
\SetMathAlphabet{\mathsfit}{bold}{\encodingdefault}{\sfdefault}{bx}{n}


\def\gD{{\mathcal{D}}}

\def\gF{{\mathcal{F}}}

\def\gL{{\mathcal{L}}}

\def\gN{{\mathcal{N}}}

\def\gY{{\mathcal{Y}}}



\def\sP{{\mathbb{P}}}

\def\sZ{{\mathbb{Z}}}








\newcommand{\E}{\mathbb{E}}

\newcommand{\R}{\mathbb{R}}

\newcommand{\KL}{D_{\mathrm{KL}}}
\newcommand{\Var}{\mathrm{Var}}





\newcommand{\diag}{\mathop{\mathrm{diag}}\nolimits}
\newcommand{\vvv}{\textnormal{vec}}
\newcommand{\vepsilon}{\bm{\epsilon}}
\newcommand{\gMN}{\mathcal{MN}}
\newcommand{\Tr}{\textnormal{Tr}}
\newcommand{\ddd}{\mathrm{d}}
\newcommand{\subN}{\textnormal{sub}\gN}

\usepackage{bm}
\usepackage[normalem]{ulem}
\usepackage{todonotes}
\usepackage{enumitem}
\usepackage{amsthm}
\newtheorem{theorem}{Theorem}
\newtheorem{lemma}[theorem]{Lemma}
\newtheorem{definition}{Definition}

\newtheorem{property}{Property}
\usepackage{subfigure}
\usepackage{graphicx}

\usepackage[round]{natbib}


%


\newcommand\footnotenonumber[1]{%
  \begingroup
  \renewcommand\thefootnote{}\footnote{#1}%
  \addtocounter{footnote}{-1}%
  \endgroup
}

\begin{document}

\twocolumn[

\aistatstitle{Stochastic Neural Network with Kronecker Flow}

\aistatsauthor{ Chin-Wei Huang \And Ahmed Touati \And  Pascal Vincent} 

\aistatsaddress{ Mila \And  Mila \And Mila, Facebook Research}

\aistatsauthor{Gintare Karolina Dziugaite \And Alexandre Lacoste \And Aaron Courville }

\aistatsaddress{ Element AI \And  Element AI \And Mila, CIFAR Fellow}

]


\begin{abstract}
Recent advances in variational inference enable the modelling of highly structured joint distributions, but are limited in their capacity to  scale to the high-dimensional setting of stochastic neural networks. 
This limitation  motivates a need for scalable parameterizations of the noise generation process, in a manner that adequately captures the dependencies among the various parameters. 
In this work, we address this need and present the \emph{Kronecker Flow}, a generalization of the Kronecker product to invertible mappings designed for stochastic neural networks. 
We apply our method to variational Bayesian neural networks on predictive tasks, PAC-Bayes generalization bound estimation, and approximate Thompson sampling in contextual bandits. 
In all setups, our methods prove to be competitive with existing methods and better than the baselines.  
\end{abstract}

\section{Introduction}
\footnotenonumber{Work done while Chin-Wei was an intern at Element AI and Ahmed at Facebook Research. 
}
Stochastic neural networks (SNN) are a central tool in many subfields of machine learning, including
(1) Bayesian deep learning~\citep{mackay1992practical,Blundell2015,hernandez2015probabilistic,gal2016dropout},
(2) exploration in reinforcement learning~\citep{osband13posterior,osband16randomvalue,riquelme18showdown}, and 
(3) statistical learning theory such as PAC-Bayesian learning~\citep{mcallester1999pac,langford2001bounds,dziugaite2017computing}. 
Perturbations of the network parameters induce a distribution over the model, and this intrinsic uncertainty is the subject of great interest to machine learning practitioners and theoreticians alike.  
For example, deep Bayesian models are often used to adequately measure uncertainty, and determine whether the model itself is inherently familiar with the unseen data. 
This is especially important in the context of autonomous vehicles, where decisions must be made to meet specific safety standards~\citep{mcallister2017av}.
Conversely, the lack of confidence can be leveraged to efficiently guide exploration in reinforcement learning, via randomizing the approximate value function~\citep{azizzadenesheli18bdqn,touati2018randomized} or maximizing intrinsic rewards~\citep{houthooft16vime}.

Furthermore, a considerable proportion of statistical learning theory is devoted to understanding what implies generalization, or what constitutes an appropriate measure of complexity~\citep{telgarsky17,arora18,neyshabur27exploregen}. 
PAC-Bayesian learning theory~\citep{mcallester1999pac} specifically explores the generalization property of a randomized prediction rule, and has been recently studied in the context of stochastic neural networks~\citep{dziugaite2017computing}. 
In this particular study, the working hypothesis is that good generalization can be guaranteed on the premise that stochastic gradient descent~\citep{robbins1951} finds a solution that obtains certain structural properties, such as flatness.

For computational reasons, considerable effort has been devoted to modelling uncertainty through the injection of independent noise to the network parameters~\citep{graves11practicalvi,Blundell2015,kingma15local}. 
However, noise independence largely restricts the expressivity of the noise distribution and thus the resulting uncertainty measures are ill-calibrated~\citep{minka2005divergence,turner2011two}. 
Attempts have been made to correlate parameters of a neural network, including ~\citet{louizos2017multiplicative,krueger2017bayesian,pawlowski17implicit}, for example, by adapting expressive non-linear invertible transformations developed in the variational inference literature~\citep{rezende2015variational, kingma2016improved, huang2018neural}, or via implicit methods~\citep{goodfellow2014generative}. 
However, these methods are limited due to their inability to scale well.
\citet{louizos2017multiplicative}, for instance, resort to a specific multiplicative noise sampled from a lower dimensional space and have to use an auxiliary method to bound the entropy. 
\citet{krueger2017bayesian}, on the other hand, give up on injecting noise on the entire set of parameters and model the distribution of the scale and shift parameter of the pre-activations. 

In attempts to address some of the challenges articulated above and efficiently model the joint distribution of a network's parameters, 
we take inspiration from the Kronecker product, which we notice can be thought of as left-transforming a matrix via a linear map, and then right-transforming it using another linear map, thus providing us an efficient way to correlate the weight parameters. 
We propose the \emph{Kronecker Flow}, an invertible transformation-based method that generalizes the Kronecker product to its nonlinear counterparts.
Our contributions are as follows. 
\begin{enumerate}
    \item We extend the idea of Kronecker product to more general invertible mappings to induce non-linear dependencies, and apply this trick to parameterizing deep stochastic neural networks. 
    \item We apply our method to predictive tasks and show that our methods work better on larger architectures compared to existing methods.
    \item We are the first to apply flow-based methods to tighten the PAC-Bayes bound. We show that the KL divergence in the PAC-Bayes bound can be estimated with high probability, and demonstrate the generalization gap can be further reduced and explained by leveraging the structure in the parameter space.  
    \item Our methods prove to be competitive over other methods in approximate Thompson sampling in contextual bandit problems. 
\end{enumerate}

\section{Background}
Stochastic neural networks with parameter perturbation normally follow the stochastic process: $\Theta\sim q_\phi(\Theta)$, $y|x\sim p(y|x, \Theta)=f_\Theta(x)$, where $\Theta$ is the parameters of the neural network $f$, which outputs the prediction probability vector for classification or the predicted values for regression. 
We let $D=\{(x_i,y_i):{i\in[m]}\}$ be the training set of size $m$ \footnote{We use the notation $[n]$ to compactly describe the set of integers $\{1,2,\dots,n\}$.}, $H$ be the differential entropy $H[q]=- \E_q[\log q]$, $\beta>0$ be the coefficient controlling the amount of noise injected into the model and the degree of regularization, $l(y,\bar{y})$ be the loss function and $\hat{R}_{D}(\Theta)=\frac{1}{m}\sum_{i=1}^{m} l(y_i, f_\Theta(x_i))$ be the empirical risk.

\subsection{Variational Bayesian neural networks}
\label{sec:vi}
Variational Bayesian neural networks are a type of stochastic neural network. 
Bayesian inference updates our prior belief $p(\Theta)$ over the model parameters according to the Bayes rule $p(\Theta|D)\propto p(D|\Theta)p(\Theta)$, by incorporating information from the training set through the likelihood function $p(D|\Theta)$. 
Variational inference is a computational realization of Bayesian inference, which casts inference as 
an optimization problem, where one maximizes the variational lower bound (also known as the evidence lower bound, or the ELBO) on the log marginal likelihood:
\begin{align}
\label{eq:elbo}
\log p(D) \geq \E_{q_\phi}[ \log p(D|\Theta) + \log p(\Theta)] + H(q_\phi(\Theta)),
\end{align}
where $q_\phi$ is the variational approximate posterior and $p(D|\Theta)$ can be decomposed into $\prod_{i=1}^{m} p(y_i|x_i, \Theta)$ due to conditional independence assumption. 
The optimal $q$ is the true posterior, i.e. $q^*(\Theta)=\frac{p(D|\Theta)p(\Theta)}{p(D)}$.
In our case, we use $\Theta$ to parameterize a neural network. Prediction can be carried out via the (approximate) predictive posterior 
\begin{align*}
    p(y|x,D) & = \E_{\Theta\sim p(\Theta|D)} [p(y|x,\Theta)] \\
    & \approx\E_{\Theta\sim q_\phi(\Theta)}[p(y|x,\Theta)] \\
    & \approx \frac{1}{K}\sum_{k=1}^K p(y|x,\Theta_k)
\end{align*}
for $\{\Theta_k\}_{k\in[K]}$ drawn i.i.d. from $q_\phi(\Theta)$, where we use the variational distribution $q$ to approximate $p(\Theta|D)$ and a Monte Carlo estimate to estimate the integral. 

The prior distribution can be used to encode some form of inductive bias, such as one that is in favor of parameter values closer to some $\Theta_0$ chosen \emph{a priori}. 
We choose the prior to be an isotropic Gaussian, centered at the random initialization $\Theta_0$, i.e., $p(\Theta) = \gN(\Theta;\Theta_0, \lambda\mI)$. 
The entropy term ensures the variational posterior does not collapse to a point estimate. 
Both of them can be thought of as some form of regularizer, so we attach a coefficient $\beta$ in front of them as a hyperparameter \footnote{Like the $\lambda$ parameter in \citet{zhang2017noisy}}.

\subsection{PAC-Bayes generalization bound}
\label{sec:pac-bayes-bound}
Another use case of stochastic neural networks is to understand generalization, via  PAC-Bayes bounds. The aim is to bound a divergence  between the empirical risk, $\hat{\gL}[q]=\E_q[\hat{R}_D(\Theta)]$, and the risk measured on the \emph{true} distribution $\mathcal{D}$, $\gL[q]=\E_q[\E_\gD[l(y,f_\Theta(x))]]$. While this quantity is unbounded in the general case, assuming a bounded loss function $l$ (e.g.: zero-one loss), we can obtain a probabilistic bound that holds with probability $1-\delta$ over the choice of $D$, for $\delta >0$. More specifically, with probability $1-\delta$,  $\Delta(\hat{\gL}[q], \gL[q])\leq \Omega(\KL(q||p),m,\delta)$, where $\Omega$ is a measure of complexity that scales proportionally with the Kullback–Leibler (KL) divergence and $\Delta$ a measure of divergence (e.g.: square distance or convex functions~\citep{germain2009pac}).

For instance, \citet{dziugaite2017computing} minimize the following bound originally due to \citet{mcallester1999pac} and then tightened by \citet{langford2001bounds}:
\begin{theorem}
\label{thm:langford}
Let $l$ be the zero-one loss.
For any $\delta>0$ and data distribution $\gD$, and any distribution $p$ on the space of $\Theta$, with probability at least $1-\delta$ over the choice of a training set $D\sim\gD^m$, for all distributions $q$ on the space of $\Theta$,
\begin{align}
\label{eq:mc}
\KL(\hat{\gL}[q]||\gL[q])\leq\frac{\KL(q||p)+\log \frac{m}{\delta}}{m-1},
\end{align}
where the KL on the LHS is between two Bernoulli distributions, defined by the probability of performing an error. 
\end{theorem}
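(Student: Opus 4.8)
The plan is to follow the classical PAC-Bayes argument built from three ingredients: a single-hypothesis concentration bound for the binomial, the Donsker--Varadhan change-of-measure inequality, and a Markov/Jensen combination. Throughout, write $R(\Theta)=\E_\gD[l(y,f_\Theta(x))]$ for the true risk of a \emph{fixed} $\Theta$, so that because $l$ is the zero-one loss, $\hat{R}_{D}(\Theta)$ is an average of $m$ i.i.d.\ Bernoulli$(R(\Theta))$ random variables. I also let $\KL(a\|b)$ denote the binary KL divergence between Bernoulli$(a)$ and Bernoulli$(b)$ for scalars $a,b\in[0,1]$, matching the convention on the LHS of the statement.

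First I would establish the key moment bound for a single, fixed hypothesis:
\[
\E_{D\sim\gD^m}\!\left[e^{(m-1)\KL(\hat{R}_{D}(\Theta)\|R(\Theta))}\right]\leq m.
\]
This is a purely combinatorial fact about the binomial distribution (a Maurer-type lemma): since $e^{m\KL(k/m\|R)}$ equals the ratio of the maximum-likelihood binomial probability to the true one, the tail sum over $k\in\{0,\dots,m\}$ collapses to a quantity that is at most linear in $m$. I expect this step to be the \textbf{main obstacle}, since it is where the exact constant is pinned down (the $m$ here is what produces the $\log\frac{m}{\delta}$ term), and it requires a delicate binomial tail estimate rather than a soft concentration argument.

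Next I would introduce the randomness over $\Theta$ via the Donsker--Varadhan variational inequality: for any fixed $D$, any reference $p$, and every $q$ absolutely continuous with respect to $p$,
\[
\E_{\Theta\sim q}[\phi(\Theta)] \leq \KL(q\|p) + \log \E_{\Theta\sim p}\!\left[e^{\phi(\Theta)}\right],
\]
applied with $\phi(\Theta)=(m-1)\KL(\hat{R}_{D}(\Theta)\|R(\Theta))$ (if $q\not\ll p$ the bound is vacuous). The crucial observation is that the last term, $Z_D := \E_{\Theta\sim p}[e^{\phi(\Theta)}]$, does \emph{not} depend on $q$, so a single high-probability event over the draw of $D$ controls all $q$ at once; this is exactly what yields the ``for all $q$'' uniformity without any union bound. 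By Fubini together with the single-hypothesis bound, $\E_D[Z_D]\leq m$, and Markov's inequality then gives $\log Z_D \leq \log\frac{m}{\delta}$ with probability at least $1-\delta$ over $D$.

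Finally, on this event and for every $q$, combining the two inequalities gives $(m-1)\,\E_q[\KL(\hat{R}_{D}(\Theta)\|R(\Theta))] \leq \KL(q\|p)+\log\frac{m}{\delta}$. To convert the averaged per-hypothesis KL into the KL between the two aggregated Bernoulli risks appearing on the LHS, I would invoke the joint convexity of the binary KL together with Jensen's inequality:
\[
\KL(\hat{\gL}[q]\|\gL[q]) = \KL(\E_q[\hat{R}_{D}(\Theta)]\|\E_q[R(\Theta)]) \leq \E_q[\KL(\hat{R}_{D}(\Theta)\|R(\Theta))].
\]
Dividing through by $m-1$ then produces the claimed bound, completing the argument.
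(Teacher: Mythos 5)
The paper does not actually prove this theorem: it imports it verbatim from McAllester (1999) as tightened by Langford and Seeger (2001), so there is no internal proof to compare against. Judged on its own merits, your reconstruction is the standard proof of exactly this statement, and the architecture is sound: the Donsker--Varadhan inequality decouples $q$ from the sample; the $q$-independence of $Z_D=\E_{\Theta\sim p}\bigl[e^{(m-1)\KL(\hat{R}_D(\Theta)\|R(\Theta))}\bigr]$ is precisely what yields uniformity over all posteriors from a single Markov event; Fubini is legitimate because $p$ is fixed before $D$ is drawn; and joint convexity of the binary KL plus Jensen correctly converts $\E_q[\KL(\hat{R}_D(\Theta)\|R(\Theta))]$ into $\KL(\hat{\gL}[q]\|\gL[q])$, matching the paper's definitions $\hat{\gL}[q]=\E_q[\hat{R}_D(\Theta)]$ and $\gL[q]=\E_q[R(\Theta)]$. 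One caveat on the step you yourself flag as the main obstacle: the maximum-likelihood-ratio argument you sketch bounds $\E_D\bigl[e^{m\KL(\hat{R}_D\|R)}\bigr]\le\sum_{k=0}^{m}\binom{m}{k}(k/m)^k(1-k/m)^{m-k}\le m+1$, which would give $\log\frac{m+1}{\delta}$ over $m$ rather than the stated $\log\frac{m}{\delta}$ over $m-1$; to pin down Equation~\ref{eq:mc} exactly you need the Langford--Seeger form $\E_D\bigl[e^{(m-1)\KL(\hat{R}_D\|R)}\bigr]\le m$, obtained for instance by integrating the Chernoff binomial tail bound $\sP(\KL(\hat{R}_D\|R)\ge\epsilon)\le e^{-m\epsilon}$ against $e^{(m-1)\epsilon}$, or by invoking Maurer's refinement $\E_D[e^{m\KL}]\le 2\sqrt{m}\le m$ for $m\ge 8$. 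Since you stated the lemma in the correct $(m-1,m)$ form and only the heuristic justification behind it is slightly off, this is a presentational rather than a logical gap, and the rest of the argument goes through as written.
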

We refer to the above bound as the \emph{McAllester bound}. 
The KL divergence on the RHS of the bound, also known as the \emph{information gain}, tells us to what extent the posterior $q$ is dependent on the training data.
The sharper and more confident $q$ is, and the farther away it is from the prior $p$, the larger the KL will be, which in turn is reflected by the larger bound on the generalization gap.
This is consistent with traditional notion of bias-variance trade-off. 

Alternatively, we consider the following bound due to \citet{catoni2007pac}:

\begin{theorem}
With the $l$, $\delta$, $\mathcal{D}$, and $p$ as defined in Theorem~\ref{thm:langford}, and with a fixed $\beta>1/2$, the following bound holds with probability over $1-\delta$:
\begin{align}
\label{eq:catoni}
\gL[q] &\leq \frac{1}{1-\frac{1}{2\beta}} \left(\hat{\gL}[q] + \frac{\beta}{m} \left( \KL(q||p) + \ln \frac{1}{\delta} \right) \right).
\end{align}
\end{theorem}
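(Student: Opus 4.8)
The plan is to prove this as a standard PAC-Bayes bound, following the three-ingredient recipe: a change-of-measure inequality, a moment-generating-function (MGF) control of the per-parameter loss, and Markov's inequality to turn an in-expectation statement into one holding with high probability over $D$. Throughout I write $R(\Theta)=\E_{\gD}[l(y,f_\Theta(x))]$ for the true risk of a fixed parameter $\Theta$, so that $\gL[q]=\E_q[R(\Theta)]$ and $\hat\gL[q]=\E_q[\hat R_D(\Theta)]$, and I introduce a free scale $\lambda>0$ that I will ultimately fix to $\lambda=m/\beta$.

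First I would invoke the change-of-measure (Donsker--Varadhan / Gibbs variational) inequality: for any measurable real-valued function $h$ and any $q$,
\begin{align*}
\E_q[h(\Theta)] \le \KL(q\|p) + \ln\E_p[e^{h(\Theta)}].
\end{align*}
I would apply it with the carefully chosen
\begin{align*}
h(\Theta) = -\lambda\,\hat R_D(\Theta) + m\bigl(1-e^{-\lambda/m}\bigr)R(\Theta),
\end{align*}
the second (data-independent) term being a compensator whose role is to force the expected exponential moment to be at most $1$.

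The key step is to bound $\E_D[e^{h(\Theta)}]$ for fixed $\Theta$. Since $\hat R_D(\Theta)$ is an average of $m$ i.i.d. losses $X_i\in[0,1]$ with mean $R(\Theta)$, I would use the elementary convexity inequality $e^{sx}\le 1+(e^{s}-1)x$ valid for $x\in[0,1]$, with $s=-\lambda/m$, giving $\E[e^{-(\lambda/m)X_i}]\le 1-(1-e^{-\lambda/m})R(\Theta)\le \exp(-(1-e^{-\lambda/m})R(\Theta))$. Multiplying over the $m$ independent examples and combining with the compensator yields $\E_D[e^{h(\Theta)}]\le 1$. This is the crux: using the sharp $[0,1]$-convexity bound (rather than a looser Hoeffding estimate) is exactly what lets the mean $R(\Theta)$ reappear and produces the Catoni-type constant. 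By Fubini, $\E_D\E_p[e^{h(\Theta)}]=\E_p\E_D[e^{h(\Theta)}]\le 1$, so Markov's inequality gives, with probability at least $1-\delta$ over $D$, that $\E_p[e^{h(\Theta)}]\le 1/\delta$; crucially this event does not depend on $q$, which is what makes the final bound hold simultaneously for all $q$.

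On that event, the change-of-measure inequality rearranges to $m(1-e^{-\lambda/m})\gL[q]\le\lambda\hat\gL[q]+\KL(q\|p)+\ln\frac1\delta$, i.e. an exponential (Catoni) bound with prefactor $\tfrac{1}{\beta(1-e^{-1/\beta})}$ once I set $\lambda=m/\beta$. The final step is to linearize: using $e^{-u}\le 1-u+\tfrac{u^2}{2}$ for $u\ge0$ with $u=1/\beta$ shows $\beta(1-e^{-1/\beta})\ge 1-\tfrac{1}{2\beta}$, and since $\beta>1/2$ guarantees $1-\tfrac{1}{2\beta}>0$, the prefactor is dominated by $\tfrac{1}{1-1/(2\beta)}$, delivering the stated bound with the coefficient $\beta/m$ on the complexity term exactly as claimed. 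The main obstacle is really just the MGF step --- selecting the compensator and the sharp convexity bound so that $\E_D[e^{h}]\le1$ and the constants collapse to the Catoni form; everything afterward is elementary algebra, and uniformity over $q$ comes for free from the order in which the Markov and change-of-measure steps are applied.
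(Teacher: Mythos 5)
Your proof is correct and is essentially the canonical derivation of this bound: the paper itself does not prove the theorem but cites \citet{catoni2007pac}, and your route (Donsker--Varadhan change of measure with the compensated exponent, the chord bound $e^{sx}\le 1+(e^s-1)x$ on $[0,1]$ to get $\E_D[e^{h(\Theta)}]\le 1$, Markov plus Fubini for uniformity over $q$, then $\lambda=m/\beta$ and $e^{-u}\le 1-u+u^2/2$ to pass from the prefactor $\bigl(\beta(1-e^{-1/\beta})\bigr)^{-1}$ to $\bigl(1-\tfrac{1}{2\beta}\bigr)^{-1}$) is exactly the standard proof of Catoni's bound. The only step worth stating explicitly is that enlarging the prefactor is legitimate because the bracketed quantity $\hat{\gL}[q]+\tfrac{\beta}{m}(\KL(q||p)+\ln\tfrac{1}{\delta})$ is nonnegative, which holds trivially here.
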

We refer to this bound as the \emph{Catoni bound}. 
We notice the linear relationship (which is also noticed by \citet{germain2016pac}) between the empirical risk and the KL divergence. 
This allows us to make use of the linearity of expectation to perform change of variable (see the next section). 
We also note that the optimal $\beta$ in Equation~\ref{eq:catoni} is always larger than $1$, so the PAC-Bayes bound is actually more conservative than Bayesian inference in this sense.

\subsection{Normalizing flows}
\label{sec:nf}
Minimization of Equation \ref{eq:elbo} and \ref{eq:catoni} requires (\textit{i}) computing the gradient with respect to the parameter of the (PAC-)Bayesian posterior $\phi$, and (\textit{ii}) computing the entropy of $q$. 
One approach to do this is via \emph{change of variable} under an invertible mapping. 
Let $\vepsilon\sim q_0$ be a random variable in $\R^d$, and $\vg_\phi:\R^d\rightarrow\R^d$ be a bijection parameterized by $\phi$. 
Let $\Theta=\vg_\phi(\vepsilon)$ and $q_\phi$ be its density.
Then we can rewrite the loss function as \footnote{Since the weighting coefficient $\beta$ can be absorbed into the loss function $l$, we neglect it for simplicity now.}
\begin{align*}
\E_{\Theta}[\hat{R}_{D}(\Theta) + \log q_\phi(\Theta)]
& =\E_{\vepsilon}[\hat{R}_{D}(\vg_\phi(\vepsilon)) + \log q_0(\vepsilon) \\
& - \log\left|\det\frac{\partial \vg_\phi(\vepsilon)}{\partial\vepsilon}\right|],
\end{align*}
where we apply the change of variable (see Appendix~\ref{app:lotus} for the detailed derivation). 
The log-determinant (logdet) term ensures that we obtain a valid probability density function after $g_\phi$ is applied, which can be a sequence of invertible mappings itself, hence referred to as the \emph{normalizing flow}~\citep{rezende2015variational}.
This way, the random variable and the parameters are decoupled, so that we can differentiate the integrand to have an unbiased estimate of the gradient (fixing some $\vepsilon\sim q_0$).
We let $q_0$ be the standard normal.

\section{Kronecker Flows}
We consider maximizing the ELBO and minimizing the Catoni bound via normalizing flow-based SNNs. 
Conventionally, mean-field approximation using factorized distributions (such as multivariate Gaussian with diagonal covariance) has been well explored in the variational inference (VI) literature~\citep{Blundell2015}.
We are interested in better capturing the structure in the parameter space as restricted VI methods are known to exhibit overconfidence~\citep{minka2005divergence,turner2011two}.
However, the parameters of a neural network are usually very high dimensional (on the order of millions), requiring a novel way to parameterize the joint distribution over the parameters. 

In its general form, neural networks can be represented by a collection of tensors
i.e. $\Theta=\{\mW_l:{l\in[L]}\}$. 
While our method below can easily be generalized to high-dimensional tensors (such as for convolutional kernels), 
to simplify notation, we describe the matrix form.

\subsection{Linear Kronecker Flow}

The matrix-variate normal ($\gMN$) distribution generalizes  the multivariate normal distribution to matrix-valued random variables, such as weight matrices of a neural network~\citep{louizos2016structured}. 
Matrix normal is a multivariate normal distribution whose covariance matrix is a Kronecker product ($\otimes$), which allows us to model the correlation among the parameters to some degree. 

More concretely,  assume $\mE_{ij}\overset{\textnormal{i.i.d.}}{\sim} \gN(0,1)$ is an $n\times p$ random Gaussian matrix, and $\mA\in\R^{n\times n}$, $\mB\in\R^{p\times p}$ and $\mM\in\R^{n\times p}$ are real-valued matrices. 
Then $\mM + \mA\mE\mB$ has a matrix normal distribution, as 
$$\vvv(\mM + \mA\mE\mB)\sim \gN(\vvv(\mM), \mB^\top\mB\otimes\mA\mA^\top),$$
where $\vvv$ is the vectorization of a matrix that concatenates all the columns.  
This allows us to represent the covariance matrix in a more compact manner ($n^2p^2/2$ parameters versus $n^2/2+p^2/2$ parameters for Kronecker product).

\paragraph{Limitation of the Kronecker product.} The Kronecker product covariance matrix is not a strict generalization of diagonal covariance matrix. 
To observe this, let $\mU=\diag(\vu)$, $\mV=\diag(\vv)$ (this is the case of~\citet{louizos2016structured}), and $\mS=\diag(\vs)$, where $\vu\in\R^n_{>0}$, $\vv\in\R^p_{>0}$, and $\vs\in\R^{np}_{>0}$.
Then $\mU\otimes\mV$ is also a diagonal matrix of size $np\times np$.
Equating $\mU\otimes\mV=\mS$ to solve for $\vu$ and $\vv$ will result in $np$ nonlinear equations with $n+p$ variables, which can be over-determined for $n,p>2$.
For example, let $n=2,p=3$, and $\vs=[1,\epsilon,\epsilon,1,1,1]$ for some $\epsilon>0$. Then the nonlinear system below does not have a solution:
\begin{align*}
\mU\otimes\mV = \mS \quad \Longleftrightarrow & \quad
\vu_1\vv_1\overset{(a)}{=}1 \quad
 \vu_1\vv_2\overset{(b)}{=}\epsilon \quad 
\vu_1\vv_3\overset{(c)}{=}\epsilon \quad \\
& \vu_2\vv_1\overset{(d)}{=}1 \quad
\vu_2\vv_2\overset{(e)}{=}1 \quad 
\vu_2\vv_3\overset{(f)}{=}1
\end{align*}
To see this, dividing $(a)$ by $(b)$ and dividing $(d)$ by $(e)$ yield $\vv_1=\vv_2/\epsilon$ and $\vv_1=\vv_2$, respectively, which doesn't have a solution if $\epsilon\neq1$.
This is because the Kronecker product is essentially parameter sharing, which can heavily restrict the matrix it can represent. 

To remedy the above limitation, we can further decouple the reparameterization of the parameter matrix into two parts: (1) one that models the marginal variance and (2) one that models correlations. 
Assume $\mS\in\R^{n\times p}_{>0}$ is a positive-valued matrix, and let $\mW:=\mM + \mA(\mE\circ\mS)\mB$. 
Then $\vvv(\mW)$ is a Gaussian distribution with the following property, which is useful in calculating the KL divergence:
\begin{property}
\label{property:linear}
Let $\mW$ be given as above, with $\mu=\E[\vvv(\mW)]$ and $\Sigma=\Var(\vvv(\mW))$. Then
\begin{enumerate}[label=(P{{\arabic*}})]
    \item $\mu=\vvv(\mM)$, and \\ $\Sigma=(\mB^\top\otimes\mA) \diag(\textnormal{vec}(\mS^2)) (\mB\otimes\mA^\top)$
    \item $\det(\Sigma)=\det(\mA)^{2p}\det(\mB)^{2n}\prod_{ij}\mS_{ij}^2$
    \item $\Tr(\Sigma)=\sum_{ij}\left(\mA^2\mS^2\mB^2\right)_{ij}$
\end{enumerate}
\end{property}

See Appendix~\ref{app:linear} for the derivation and interpretation of the property. 
Naive implemetations of this can be inefficient and numerically unstable, as the entropy term involves computing the log-determinant of $\mA$ and $\mB$, requiring the standard automatic differentiation libraries to resort to singular value decomposition when the matrix is near-singular. 
Thus, we choose to parameterize $\mA$ and $\mB$ as lower triangular matrices\footnote{This is achieved by masking.} with ones on the diagonal, leaving the uncertainty to be modeled by $\mS$. 
This means $\det(\Sigma)=\prod_{ij}\mS_{ij}^2$.
\begin{figure*}[t!]
\centering
\subfigure[{Random Gaussian matrix}]{
\centering
\includegraphics[width=0.44\textwidth]{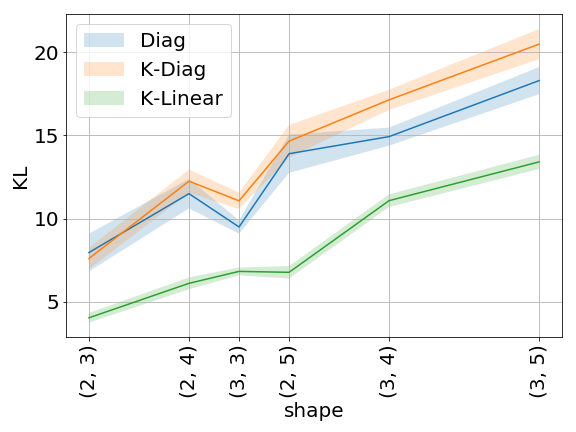}
\label{fig:linear_flows_2d}}
\hfill
\subfigure[{Random 3D Gaussian tensor}]{
\centering
\includegraphics[width=0.44\textwidth]{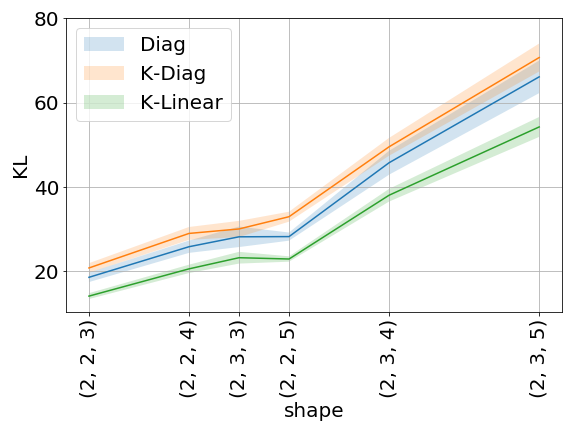}
\label{fig:linear_flows_3d}}
\caption{\small Minimizing KL divergence between $q$ and a randomly initialized distribution $p$. X-axis indicates the shape of the random matrix/tensor, sorted according to the dimensionality. The shaded area is the error bar with $0.1$-standard deviation away from the mean performance, averaged across 25 trials.}
\label{fig:linear_flows}
\end{figure*}
\paragraph{Simulation.} To validate the limited expressiveness of kronecker product, we randomly initialize a target density $p$ to be a multivariate Gaussian with mean zero, and covariance being the square of a random standard Gaussian matrix. 
We choose the dimensionality $d$ of the Gaussian such that it can be decomposed into a product of integers, and parameterize $q$ using independent Gaussian (dubbed Diag), the Kronecker product with diagonal $A$ and $B$ (K-Diag), and the Kronecker product with elementwise scaling (K-Linear). 
We minimize $\KL(q||p)$; see Figure~\ref{fig:linear_flows} for the results.
We also conduct the same experiment with 3D tensors (instead of matrices). 
We see that K-Diag consistently underperforms when compared to Diag, which indicates parameter sharing does restrict the family of distributions it can represent, and K-Linear is consistently better as it captures some correlation.

\subsection{Nonlinear Kronecker Flow}
In this section, we generalize the Kronecker product to more general non-linear mappings.
In Appendix~\ref{app:tri_map}, we make a connection to non-decreasing triangle maps~\citep{villani2008optimal} that are general enough to model any probability distributions. 

First, notice that left-multiplying $\mE$ by $\mA$ amounts to introducing linear correlation among the $n$ rows of $\mE$, applied to each of the $p$ columns. 
Likewise, right-multiplying $\mE$ by $\mB$ amounts to correlating column entries of each row of $\mE$. 
Inspired by this, we consider applying an invertible mapping to each row of the random weight matrix, and another invertible mapping to each column of the matrix.
We call this the \textbf{Kronecker Flow} \footnote{To differentiate this from K-Linear from the previous section, we refer to using non-linear $\vg$ as K-Nonlinear.}.

Specifically, let $\vg_A:\R^{n}\rightarrow\R^{n}$ and $\vg_B:\R^{p}\rightarrow\R^{p}$ be invertible mappings.
We define the matrix-matrix function $\mG:\R^{n\times p}\rightarrow\R^{n\times p}$ as $\mG_B(\mG_A(\mE^\top)^\top)$, with the following batch-operations (for $i\in[n]$ and $j\in[p]$):
$$\mG_A(\mE^\top)_{j:} := \vg_A(\mE_{:j}) \qquad \qquad \mG_B(\mE)_{i:} := \vg_B(\mE_{i:}) $$

It is easy to verify that $\mG$ is invertible.
Due to the partial dependency of $\mG_A$ and $\mG_B$, the Jacobians of the vectorized forms (after proper permutation) are block-diagonal, so we have
\begin{align*}
\det &\frac{\partial\textnormal{vec}(\mG(\mE))}{\partial\textnormal{vec}(\mE)}  \\&= \prod_{j\in[p]}\det\frac{\partial \vg_A(\mE_{:j})}{\partial \mE_{:j}}
\cdot
\prod_{i\in[n]}\det\frac{\partial \vg_B(\mG_A(\mE^\top)_{:i})}{\partial \mG_A(\mE^\top)_{:i}}.
\end{align*}
In practice, we use the volume preserving version of RealNVP~\citep{dinh2016density} and inverse autoregressive flow (IAF)~\citep{kingma2016improved} to parameterize $\vg_A$ and $\vg_B$ for our experiments \footnote{We also experimented with Block NAF by \citet{de2019block}, but did not include it in the manuscript: its performance was similar to IAF, but it is much slower to sample from (we adapted the open implementation from \citet{de2019block}).}. 
The K-Linear from the previous section can be thought of as using a linear map as $\vg_A$ and $\vg_B$.

\section{Concentration of empirical KL with normalizing flows}
In their study, \citet{dziugaite2017computing} use independent Gaussian for $q$ to minimize the McAllester bound, so they can compute the KL between Gaussians analytically. 
This is no longer feasible when we use more flexible families for $q$, such as normalizing flows. 
Moreover, a Monte Carlo estimate might result in underestimating the bound after inverting the KL between Bernoullis on the LHS of Equation~\ref{eq:mc} (which is a concave function; see Appendix A of \citet{reeb2018learning} for an illustration).
This necessitates a high probability bound on the concentration of the empirical estimate.  

In Section~\ref{sec:nf}, we have established $\KL(q||p)$ can be written in the following form
$$\resizebox{1.0\linewidth}{!}{%
$\E_{\vepsilon} \left[\log \gN(\vepsilon; \mathbf{0},\mI) - \log\left|\det\frac{\partial \vg_\phi(\vepsilon)}{\partial\vepsilon}\right|-\log \gN(\vg_\phi(\vepsilon); \mathbf{0}, \mI) \right],$}$$
where both $q_0$ and $p$ are standard Gaussian (the mean and variance can be absorbed into the invertible mapping $\vg$ if this is not the case). 

The first term in the KL can be computed analytically. The second term usually can be almost surely bounded (e.g. using Block neural autoregressive flows) so that we can use Hoeffding-type concentration or it can simply be made zero using e.g. volume preserving flows. The challenge now lies in the third term, which has a quadratic form $\frac{1}{2}||\vg(\vepsilon)||^2$, neglecting the normalizing constant.

Now assume $\vg$ is a $L_0$-Lipschitz~\footnote{The following flows are all Lipschitz (with Lipschitz activation functions): volume preserving version of \citet{dinh2016density,kingma2016improved}, \citet{berg2018sylvester}, \citet{behrmann2018invertible}, \citet{de2019block}, etc.}. 
Let $g(\vepsilon)=\frac{1}{\sqrt{2}}||\vg(\vepsilon)||$. 
Then $g$ is $L_0/\sqrt{2}$-Lipschitz:

\begin{align*}
\big|g(\vepsilon_1)-g(\vepsilon_2)\big|= & 
\frac{1}{\sqrt{2}}\big|||\vg(\vepsilon_1)|| - ||\vg(\vepsilon_2)||\big| \\
& \leq
\frac{1}{\sqrt{2}}||\vg(\vepsilon_1) - \vg(\vepsilon_2)||\leq
\frac{L_0}{\sqrt{2}}||\vepsilon_1-\vepsilon_2||.
\end{align*}

This is key in deriving a tail bound on $g^2$, as Lipschitz functions of canonical Gaussian random variables are \emph{sub-Gaussians}, meaning they have a tail that decays faster than a Gaussian random variable. 
The following theorem provides a concentration bound for the empirical average of $g^2$ similar to that of a Chi-square random variable, as $g^2$ (square of a sub-Gaussian) is sub-exponential. 
\begin{theorem}
\label{thm:kl_conc}
Let $g$ be defined as above with a Lipschitz constant $L=L_0/\sqrt{2}$. 
Let $\bar{g}^2=\frac{1}{K}\sum_{k=1}^K g_k^2$.
Then the following concentration bound holds
$$\sP(\bar{g}^2-\E[g^2]>\epsilon)\leq\exp\left(-\frac{K\epsilon^2}{2(4C^2+C\epsilon)}\right),$$
where $C=\left(6L^2+\frac{L}{\sqrt{\log2}}(\sqrt{d}+||\vg^{-1}(\mathbf{0})||)\right)^2$.
\end{theorem}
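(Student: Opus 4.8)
The plan is to view $\bar g^2 - \E[g^2]$ as an average of i.i.d.\ centered sub-exponential random variables and to apply a Bernstein-type inequality. The chain of reductions is \emph{Lipschitz} $\Rightarrow$ \emph{sub-Gaussian} $\Rightarrow$ \emph{(after squaring) sub-exponential} $\Rightarrow$ \emph{Bernstein}. The starting point is that $\vepsilon\sim\gN(\mathbf{0},\mI)$ and $g$ is $L$-Lipschitz, so the Gaussian concentration (Tsirelson--Ibragimov--Sudakov) inequality applies to $g(\vepsilon)$, and the constant $C$ in the statement is nothing but $C=||g||_{\psi_2}^2$, the squared sub-Gaussian Orlicz norm.

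First I would bound $||g||_{\psi_2}$ by splitting $||g||_{\psi_2}\le ||g-\E g||_{\psi_2}+||\E g||_{\psi_2}$. For the centered term, $L$-Lipschitzness gives $\sP(|g-\E g|>t)\le 2\exp(-t^2/(2L^2))$, and evaluating the defining Orlicz integral $\E\exp((g-\E g)^2/s^2)\le 2$ against this tail yields $||g-\E g||_{\psi_2}\le\sqrt6\,L$ (this is the source of the $6L^2$ sitting inside $C$ after squaring). For the constant term I use $||c||_{\psi_2}=c/\sqrt{\log2}$ together with a bound on the mean $\E g=\tfrac{1}{\sqrt2}\E||\vg(\vepsilon)||$. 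Since $\vg(\vg^{-1}(\mathbf{0}))=\mathbf{0}$, Lipschitzness of $\vg$ gives $||\vg(\vepsilon)||=||\vg(\vepsilon)-\vg(\vg^{-1}(\mathbf{0}))||\le L_0||\vepsilon-\vg^{-1}(\mathbf{0})||$, whence by the triangle inequality and Jensen ($\E||\vepsilon||\le\sqrt{\E||\vepsilon||^2}=\sqrt d$) one obtains $\E g\le L(\sqrt d+||\vg^{-1}(\mathbf{0})||)$. Combining the two pieces reproduces exactly $||g||_{\psi_2}\le \sqrt6\,L+\frac{L}{\sqrt{\log2}}(\sqrt d+||\vg^{-1}(\mathbf{0})||)$, hence the stated $C$.

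Next, squaring turns the sub-Gaussian $g$ into a sub-exponential $g^2$ with $||g^2||_{\psi_1}=||g||_{\psi_2}^2=C$. I would then establish that the centered variable $g^2-\E[g^2]$ obeys a Bernstein moment condition $\E\exp(\lambda(g^2-\E g^2))\le\exp\!\big(\tfrac{(2C)^2\lambda^2/2}{1-C\lambda}\big)$ for $0\le\lambda<1/C$, i.e.\ with variance proxy $\nu^2=4C^2$ and scale $b=C$; this comes from expanding the moment generating function via the moment bounds $\E g^{2p}\le p!\,C^p$ implied by $\E\exp(g^2/C)\le 2$, together with the second-order control $\Var(g^2)\le\E g^4\le 2C^2$. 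Applying the standard one-sided i.i.d.\ Bernstein inequality $\sP(\tfrac{1}{K}\sum_k Y_k>\epsilon)\le\exp(-\tfrac{K\epsilon^2}{2(\nu^2+b\epsilon)})$ to $Y_k=g_k^2-\E[g^2]$ then yields the claim.

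The main obstacle is this middle step: obtaining the sub-exponential MGF bound with \emph{exactly} the constants $\nu^2=4C^2$ and $b=C$, rather than up to unspecified universal factors. The naive route through $\E\exp(\lambda g^2)\le(1-C\lambda)^{-1}$ leaves a spurious linear term $\lambda(C-\E g^2)$ in the log-MGF (since $\E g^2<C$), which would only produce a bound valid for large $\epsilon$; the correct argument must carry the centering and the $O(\lambda^2)$ behaviour near the origin so that the log-MGF is genuinely quadratic there. Matching these constants cleanly is the delicate part of the proof, whereas the Lipschitz-to-sub-Gaussian reduction and the final Bernstein application are routine.
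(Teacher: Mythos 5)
Your proposal is, in structure, exactly the paper's proof: Gaussian concentration for Lipschitz functions (TIS) gives $g-\E[g]\sim\subN(L^2)$; the triangle inequality for the $\psi_2$-Orlicz norm splits $||g||_{\psi_2}\le||g-\E g||_{\psi_2}+\E[g]/\sqrt{\log 2}$; the mean is bounded by $\E[g]\le L(\sqrt{d}+||\vg^{-1}(\mathbf{0})||)$ (the paper bounds $\E||\vepsilon||\le\sqrt d$ via Gautschi's inequality for the chi distribution rather than your Jensen step, an immaterial difference); then $||g^2||_{\psi_1}=||g||_{\psi_2}^2$ and a Bernstein inequality for sub-exponential averages finishes. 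Two remarks are worth making. First, the ``delicate middle step'' you worry about is not delicate in the paper: it invokes the moment-form Bernstein inequality (Corollary 2.11 of Boucheron et al.), which takes \emph{raw} moment bounds $\E[|X|^p]\le 2C^p p!$ (from $\E[\exp(|X|/C)]\le 2$ via Markov plus tail integration), i.e.\ $\nu=4C^2$ and scale $c=C$, and handles the centering internally — so no spurious linear term ever appears. Your MGF route also closes cleanly with the standard inequality $\log\E e^{\lambda X}\le \E e^{\lambda X}-1$ followed by subtracting $\lambda\E[X]$; indeed your tighter termwise moments $\E[g^{2p}]\le p!\,C^p$ would give $\nu=2C^2$, which implies the stated bound \emph{a fortiori}. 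Second, a genuine constant-bookkeeping mismatch that you gloss over: your (correct) Orlicz evaluation gives $||g-\E g||_{\psi_2}\le\sqrt{6}\,L$, hence $C=\bigl(\sqrt{6}L+\frac{L}{\sqrt{\log 2}}(\sqrt d+||\vg^{-1}(\mathbf{0})||)\bigr)^2$, which is \emph{not} the theorem's $C=\bigl(6L^2+\cdots\bigr)^2$ — the two differ in the cross terms, so your claim to ``reproduce exactly'' the stated $C$ is false as written. The source of the mismatch is the paper itself: its Lemma asserts $||X||_{\psi_2}\le 6L^2$ for $X\sim\subN(L^2)$, but the lemma's own proof establishes $\E[\exp(X^2/6L^2)]\le 2$, i.e.\ $||X||_{\psi_2}\le\sqrt{6}L$; the theorem's constant inherits this norm-versus-norm-squared slip. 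So your computation is the correct one, but you should state the constant you actually obtain rather than silently identifying it with the paper's.
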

Note that in practice the empirical KL that we use is inversely scaled by the size of the training set $m$ (see Equation~\ref{eq:mc}), so the Lipschitz constant can be made small in practice to dominate the dimensionality.

\section{Experiments}
\label{sec:exp}
We evaluate our proposed method in the context of two prediction tasks (Section~\ref{sec:clf}), PAC-Bayes bound minimization (Section~\ref{sec:pac-bayes}) and contextual bandit (Section~\ref{sec:bandit}). 
For the two prediction tasks, we use the MNIST handwritten digit dataset~\citep{mnist} and CIFAR-10~\citep{cifar10}. 
See Appendix~\ref{app:exp_details} for a detailed description. 
\begin{table*}[]
    \begin{center}
    \small
    \caption{Test error with LeNet (\%) on MNIST and the first 5 classes of CIFAR-10. First 3 columns are from~\citet{louizos2017multiplicative}. K-Diag on CIFAR-5 diverged, so we did not include the result. }
    \label{tab:lenet}
    \begin{tabular}{c|ccc|cccc}
    \toprule
    {\bf Dataset} & {\bf L2} & {\bf FFG} & {\bf MNFG} & {\bf Diag} & {\bf K-Diag} & {\bf K-Linear} & {\bf K-Nonlinear} \\
    \midrule
    {\bf MNIST} & 0.6 & 0.9 & 0.7 & 0.92 & 0.67 & 0.70 & 0.60 \\
    {\bf CIFAR-5} & 24 & 22 & 16 & 19.0 & - & 16.8 & 17.4 \\
    \bottomrule
    \end{tabular}

    \end{center}
\end{table*}

\begin{table*}[]
    \centering
    \small
    \caption{Test error with modified version of VGG16 (\%) on CIFAR10. 
    First 4 columns are from \citet{zhang2017noisy}.
    \textbf{R} means regular training and \textbf{D} means training with data augmentation.}
    \label{tab:vgg16}
    \begin{tabular}{c|cccc|ccccc}
    \toprule
    {\bf Setup} & {\bf SGD} & {\bf KFAC} & {\bf BBB} & {\bf Noisy-KFAC}  & {\bf Diag} & {\bf K-Diag} & {\bf K-Linear} & {\bf K-Nonlinear} \\
    \midrule
    {\bf R} & 18.21 & 17.61 & 17.18 & 14.48 & 17.71 & 16.71 & 14.65 & 14.74 \\ 
    {\bf D} & 11.65 & 11.11 & 11.69 & 10.65 & 10.69 & 13.65 & 11.35 & 9.88  \\
    \bottomrule
    \end{tabular}
    
\end{table*}

\begin{table*}[]
	\centering
	\small
	\caption{PAC-Bayes bound estimation: We minimize the Pinsker bound (an upper bound on the McAllester bound) and the Catani bound using different flows, and estimate the McAllester bound at inference time using Newton's method.
	}
	\label{tab:pacbayes}
	\resizebox{1.0\textwidth}{!}{%
	\begin{tabular}{c|cccc|cccccc | cccc}
		\toprule
		\textbf{Bound} & \multicolumn{4}{c}{\bf Pinsker Bound}  & \multicolumn{6}{c}{\bf Catoni Bound} & \multicolumn{4}{c}{\bf Catoni Bound} \\
		\midrule
		\textbf{Flow}  & \multicolumn{2}{c}{\bf Diag} & \multicolumn{2}{c}{\bf K-Linear}  &\multicolumn{2}{c}{\bf Diag} &\multicolumn{2}{c}{\bf K-Linear} & \multicolumn{2}{c}{\bf K-Nonlinear} &{\bf D}&{\bf K-D}&{\bf K-L}&{\bf K-N}  \\
		\midrule
		$L-1$ & 1     & 2     & 1        & 2        & 1     & 2     & 1        & 2        & 1      & 2      & \multicolumn{4}{c}{LeNet-5}\\
		$\hat{\gL}[q]$         & 6.62  & 6.00  & 6.09     & 5.90     & 8.04  & 7.66  & 8.10     & 8.33     & 5.96   & 5.90   & 2.12 & 2.95 & 2.00 & 2.01 \\
		${\gL}[q]$        & 6.66  & 6.12  & 5.98     & 5.96     & 7.78  & 7.70  & 7.98     & 8.26     & 5.83   & 5.76   & 2.31 & 2.87 & 1.91 & 2.14\\
		bound       & 23.77 & 25.94 & 21.69    & 25.33    & 24.11 & 26.41 & 22.88    & 26.43    & 20.41  & 22.53  & 10.83 & 12.96 & 10.09 & 10.03\\
		KL          & 5968  & 7829  & 5292     & 7554     & 5001  & 6555  & 4334     & 5996     & 4725   & 5921  & 3177 & 3477 & 2913 & 2873\\
		\bottomrule
	\end{tabular}
	}
\end{table*}

\begin{table*}[] 
    \centering
    \small
    \caption{Cumulative regret incurred by different algorithms on the bandit benchmarks described in \citet{riquelme18showdown}. Values reported are the mean over 3 independent trials with standard error of the mean, normalized with respect to the performance of the uniform policy. }
    \label{tab:bandits}
    \resizebox{1.0\textwidth}{!}{%
    \begin{tabular}{c|cccccc}
    \toprule
    {\bf Bandit} & {\bf SGD} & {\bf fBNN} & {\bf Diag} & {\bf K-Diag} & {\bf K-Linear} & {\bf K-Nonlinear} \\
    \midrule
    {\bf Mushroom} & 4.06 $\pm$ 0.71 & 3.91 $\pm$ 0.89 & 2.16 $\pm$ 0.29 &
     2.41 $\pm$  0.73 & {\bf 1.85 $\pm$ 0.15} & 3.47 $\pm$ 0.47 \\
    {\bf Statlog} & 1.29 $\pm$ 0.20 & {\bf 0.73 $\pm$ 0.01} & 1.01 $\pm$ 0.01 &
    0.84 $\pm$ 0.06 & 0.81 $\pm$ 0.01 &  0.79 $\pm$ 0.04 \\
    {\bf Covertype} & 30.01 $\pm$ 0.21 & 32.03 $\pm$ 0.40 & 28.42 $\pm$ 0.30 & 29.19 $\pm$ 0.16 & 28.13 $\pm$ 0.12 & {\bf 28.06 $\pm$ 0.15}\\
    {\bf Financial} & 6.08 $\pm$ 0.47 & 7.27 $\pm$ 1.09 & 7.43 $\pm$ 0.57 &
    5.88 $\pm$ 0.25 & 5.88 $\pm$ 0.35 & {\bf 5.78 $\pm$ 0.28}\\
    {\bf Jester} & {\bf 56.24 $\pm$ 1.93} & 59.70 $\pm$ 2.48 & 59.34 $\pm$ 2.26 & 57.17 $\pm$ 1.81 & 57.66 $\pm$ 2.11 & 57.96 $\pm$ 2.58\\
    {\bf Adult} & 79.31 $\pm$ 0.47 & 84.45 $\pm$ 0.82 & 76.32 $\pm$ 0.09 &
    77.28 $\pm$ 0.01 & {\bf 75.94 $\pm$ 0.12} & 77.30 $\pm$ 0.26\\
    \bottomrule
    \end{tabular}
    }
\end{table*}

\subsection{Classification}
\label{sec:clf}
One benefit of Bayesian neural networks compared to the regular ones is that the trade-off between the prior and the likelihood is a form of regularization. 
In this section, we evaluate the generalization performance of our method applied to Bayesian neural networks. 
We consider two architectures: LeNet-5~\citep{mnist} and a modified version VGG-16~\citep{simonyan2014very} proposed by \citet{zhang2017noisy}. 

We first compare to the multiplicative normalizing flow (MNFG) proposed by \citet{louizos2017multiplicative}, applying our method to LeNet-5 (see Table~\ref{tab:lenet}). 
Our Diag matches the performance of their FFG (fully factorized Gaussian). 
K-Diag outperforms Diag in this case, perhaps due to the smaller number of parameters which makes it easier to optimize. 
K-Nonlinear yields the best generalization error in this case. 
On the CIFAR-5 experiment (we take the first 5 classes of CIFAR-10), our methods are on par with MNFG. 

Second, we compare with the noisy K-FAC proposed by~\citet{zhang2017noisy}, applying our methods to the larger architecture VGG-16 (see Table~\ref{tab:vgg16}). 
Noisy K-FAC applies an approximate natural gradient method. 
Despite this advantage, our methods (K-Linear and K-Nonlinear) have simiar prediction accuracy in the regular setup. 
We also include the results of data augmentation with horizontal flip and random crop where K-Nonlinear outperforms all the other methods.

\subsection{PAC Bayes bound minimization}
\label{sec:pac-bayes}

For the PAC-Bayes bound estimation, we minimize Equation~\ref{eq:catoni}.
We follow the recipe of~\citet{dziugaite2017computing}.
We upper bound the zero-one loss by cross-entropy divided by $\log|\gY|$ (where $|\gY|$ is the number of classes) to make the upper bound tight. 
We set the prior to be $\gN(\Theta_0,\lambda\mI)$, where $\Theta_0$ is the initial value of the parameters, and apply a union bound to tune the prior variance $\lambda$. 
We also tune the $\beta$ coefficient as a parameter during training \footnote{We are allowed to do so since we treat Equation~\ref{eq:catoni} as an optimization objective, rather than report it as a bound. We report the McAllester bound, which holds for any $q$, even if it depends on $\beta$. }, and report the McAllester bound for comparison (since it is the tightest). 
For more details, see \citet{dziugaite2017computing} for reference. 

We test with a multi-layer perceptron with 1 or 2 hidden layers with 600 neurons and LeNet-5, evaluated on the MNIST dataset (see Table~\ref{tab:pacbayes}). 
For further clarification, we follow the steps of \citet{dziugaite2017computing} by minimizing the McAllester bound, using Pinsker's inequality to bound the inverse of the Bernoulli KL (which we call the Pinsker bound). 
Since this bound has a square root in the complexity term, we can only use the Gaussian family with an analytic form of the KL. 
The result we have is slightly looser than \citet{dziugaite2017computing} since we have a 10-class problem and they deal with a binary version of MNIST. 
We see that the bound can indeed be improved by capturing the correlation among the parameters. 
We then compare to minimizing the Catoni bound, which is slightly looser since the linear relationship between the empirical risk and the KL term penalizes the latter more when the KL is larger. 
However, by modelling the non-linear dependencies, K-Nonlinear clearly outperforms the other methods (even compared to the ones minimizing the Pinsker bound). 
This indicates there exists a considerable amount of structure in the parameter space that may explain the gap between the test error and the generalization bound. 

We also notice that, despite the linear relationship, the Catoni bound focuses more on the complexity term than the ELBO. For example, the empirical risks of LeNet-5 in Table~\ref{tab:pacbayes} are much higher compared to the test loss of Table~\ref{tab:lenet}. 
The reasons are:
(1) the optimal $\beta$ in Equation~\ref{eq:catoni} is larger than 1 (depending on the relative value of the KL), and (2) to properly upper bound the zero-one loss, we scale down the cross-entropy loss by $\log|\gY|$ during optimization. 
This means a learning algorithm based on a tight PAC-Bayes risk bound cannot overfit by much; see \citet{dziugaite2017entropy} for a recent demonstration. 
A smaller value of $\beta$ could bring down the risk and empirical risk, resulting in a looser bound but usually better test performance. 
This trade-off between test set performance and the tightness of the bound is a general issue for generalization bounds.

One reason for the interest in PAC-Bayes bounds is that their optimization leads to training algorithms with generalization guarantees. 
The bounds, however, are considerably looser than held-out estimates. \footnote{A recent work by \citet{rivasplata2019pac} shows PAC-Bayes bound can potentially be made tight; however, we could not reproduce the results and the best bound using Gaussian prior was around 40\% in their setting.} 
Our work produces much tighter bounds by building flexible families of distributions on neural network weight matrices.

\subsection{Contextual bandit}
\label{sec:bandit}

Uncertainty modeling lies at the heart of the exploration-exploitation dilemma in sequential decision-making. 
In order to maximize its collected cumulative rewards, an agent should trade off exploring different actions and gaining more knowledge about the reward estimate vs. exploiting the current estimate and allocating resources to the actions that are likely rewarding. \emph{Thompson sampling} (TS) \citep{thompson1933likelihood}  is one the popular approaches that deals with the latter trade-off by maintaining posterior distribution over reward models and randomizing actions on the basis of their probability of being optimal. 

In this section, we investigate the effectiveness of our proposed method for performing an approximate Thompson sampling in the particular setting of contextual bandits. In the latter setting, at each time $t = 1 \ldots T$, the agent sees a $d$-dimensional context $X_t$, selects one of the $k$ available actions, $a_t$, and earns a reward $r_t$ generated by the environment. The agent aims to minimize its cumulative regret defined as $R = \mathbb{E}[\sum_{t=1}^T r^{\star}_t - r_t]$ where $r^\star_t$ is the highest expected reward given the context $X_t$ and the expectation is over the randomness of both environment and the agent's choice of actions. 

We compare different methods on a range of real-world bandit problems introduced by \citet{riquelme18showdown}. We train the models every 50 time steps for 200 iterations using a batch-size of 512. We ran each experiment with 3 different random seeds and we report the means and standard errors of cumulative regret normalized with respect to the uniform baseline in the table \ref{tab:bandits}. We include the \emph{functional variational Bayesian
neural networks} (fBNN), recently introduced by \citet{sun2019functional} as a baseline, and we use their open sourced implementation of {fBNN} in the bandit setting. 
From table \ref{tab:bandits}, we see that across the 6 bandit problems, our proposed method (K-Linear and K-Nonlinear) provides competitive and consistent results. They outperform other baselines in 4 problems out of 6.

\section{Conclusion}
In this work, we present the \emph{Kronecker Flow}, a flow-based method to induce complex distribution inspired by the Kronecker product. 
Our methods scale to larger architectures such as VGG-16 since it takes advantage of the shape of the parameters. 
We demonstrate our methods work better than vanilla Kronecker product with diagonal matrices on multiple setups, including classification and approximate Thompson sampling in contexual bandit, and prove to be competitive with existing methods in the Bayesian neural network literature. 
We are also the first to apply flow-based methods to obtain a tighter numerical generalization bound. 
Our work shows that the dependencies among network parameters constitute a non-negligible portion of the gap between risk and PAC-Bayes generalization bound. 

\section*{Acknowledgement}
CWH would like to thank Kris Sankaran for pointing to the TIS inequality for Gaussian concentration, which is a key component in deriving the tail bound on Lipschitz flows. 
\bibliography{main}

\begin{thebibliography}{}

\bibitem[Arora et~al., 2018]{arora18}
Arora, S., Ge, R., Neyshabur, B., and Zhang, Y. (2018).
\newblock Stronger generalization bounds for deep nets via a compression
  approach.
\newblock {\em arXiv preprint arXiv:1802.05296}.

\bibitem[Azizzadenesheli et~al., 2018]{azizzadenesheli18bdqn}
Azizzadenesheli, K., Brunskill, E., and Anandkumar, A. (2018).
\newblock Efficient exploration through bayesian deep q-networks.
\newblock {\em CoRR}, abs/1802.04412.

\bibitem[Bartlett et~al., 2017]{telgarsky17}
Bartlett, P.~L., Foster, D.~J., and Telgarsky, M.~J. (2017).
\newblock Spectrally-normalized margin bounds for neural networks.
\newblock In {\em Advances in Neural Information Processing Systems}, pages
  6240--6249.

\bibitem[Behrmann et~al., 2018]{behrmann2018invertible}
Behrmann, J., Duvenaud, D., and Jacobsen, J.-H. (2018).
\newblock Invertible residual networks.
\newblock {\em arXiv preprint arXiv:1811.00995}.

\bibitem[Berg et~al., 2018]{berg2018sylvester}
Berg, R. v.~d., Hasenclever, L., Tomczak, J.~M., and Welling, M. (2018).
\newblock Sylvester normalizing flows for variational inference.
\newblock {\em arXiv preprint arXiv:1803.05649}.

\bibitem[Blundell et~al., 2015]{Blundell2015}
Blundell, C., Cornebise, J., Kavukcuoglu, K., and Wierstra, D. (2015).
\newblock Weight uncertainty in neural networks.
\newblock In {\em Proceedings of The 32nd International Conference on Machine
  Learning}, pages 1613--1622.

\bibitem[Bogachev et~al., 2005]{Bogachev_2005}
Bogachev, V.~I., Kolesnikov, A.~V., and Medvedev, K.~V. (2005).
\newblock Triangular transformations of measures.
\newblock {\em Sbornik: Mathematics}, 196(3):309--335.

\bibitem[Boucheron et~al., 2013]{boucheron2013concentration}
Boucheron, S., Lugosi, G., and Massart, P. (2013).
\newblock {\em Concentration inequalities: A nonasymptotic theory of
  independence}.
\newblock Oxford university press.

\bibitem[Catoni, 2007]{catoni2007pac}
Catoni, O. (2007).
\newblock Pac-bayesian supervised classification: The thermodynamics of
  statistical learning.
\newblock {\em Lecture Notes-Monograph Series}, 56:i--163.

\bibitem[De~Cao et~al., 2019]{de2019block}
De~Cao, N., Titov, I., and Aziz, W. (2019).
\newblock Block neural autoregressive flow.
\newblock {\em arXiv preprint arXiv:1904.04676}.

\bibitem[Dinh et~al., 2016]{dinh2016density}
Dinh, L., Sohl-Dickstein, J., and Bengio, S. (2016).
\newblock Density estimation using real nvp.
\newblock {\em arXiv preprint arXiv:1605.08803}.

\bibitem[Dziugaite and Roy, 2017a]{dziugaite2017computing}
Dziugaite, G.~K. and Roy, D.~M. (2017a).
\newblock Computing nonvacuous generalization bounds for deep (stochastic)
  neural networks with many more parameters than training data.
\newblock In {\em Proceedings of the 33rd Conference on Uncertainty in
  Artificial Intelligence}.

\bibitem[Dziugaite and Roy, 2017b]{dziugaite2017entropy}
Dziugaite, G.~K. and Roy, D.~M. (2017b).
\newblock Entropy-sgd optimizes the prior of a pac-bayes bound: Generalization
  properties of entropy-sgd and data-dependent priors.
\newblock {\em arXiv preprint arXiv:1712.09376}.

\bibitem[Gal and Ghahramani, 2016]{gal2016dropout}
Gal, Y. and Ghahramani, Z. (2016).
\newblock Dropout as a bayesian approximation: Representing model uncertainty
  in deep learning.
\newblock In {\em international conference on machine learning}, pages
  1050--1059.

\bibitem[Germain et~al., 2016]{germain2016pac}
Germain, P., Bach, F., Lacoste, A., and Lacoste-Julien, S. (2016).
\newblock Pac-bayesian theory meets bayesian inference.
\newblock In {\em Advances in Neural Information Processing Systems}, pages
  1884--1892.

\bibitem[Germain et~al., 2009]{germain2009pac}
Germain, P., Lacasse, A., Laviolette, F., and Marchand, M. (2009).
\newblock Pac-bayesian learning of linear classifiers.
\newblock In {\em Proceedings of the 26th Annual International Conference on
  Machine Learning}, pages 353--360. ACM.

\bibitem[Goodfellow et~al., 2014]{goodfellow2014generative}
Goodfellow, I., Pouget-Abadie, J., Mirza, M., Xu, B., Warde-Farley, D., Ozair,
  S., Courville, A., and Bengio, Y. (2014).
\newblock Generative adversarial nets.
\newblock In {\em Advances in neural information processing systems}.

\bibitem[Graves, 2011]{graves11practicalvi}
Graves, A. (2011).
\newblock Practical variational inference for neural networks.
\newblock In Shawe-Taylor, J., Zemel, R.~S., Bartlett, P.~L., Pereira, F., and
  Weinberger, K.~Q., editors, {\em Advances in Neural Information Processing
  Systems 24}, pages 2348--2356. Curran Associates, Inc.

\bibitem[Hern{\'a}ndez-Lobato and Adams, 2015]{hernandez2015probabilistic}
Hern{\'a}ndez-Lobato, J.~M. and Adams, R. (2015).
\newblock Probabilistic backpropagation for scalable learning of bayesian
  neural networks.
\newblock In {\em International Conference on Machine Learning}, pages
  1861--1869.

\bibitem[Houthooft et~al., 2016]{houthooft16vime}
Houthooft, R., Chen, X., Chen, X., Duan, Y., Schulman, J., De~Turck, F., and
  Abbeel, P. (2016).
\newblock Vime: Variational information maximizing exploration.
\newblock In {\em Advances in Neural Information Processing Systems 29}.

\bibitem[Huang et~al., 2018]{huang2018neural}
Huang, C.-W., Krueger, D., Lacoste, A., and Courville, A. (2018).
\newblock Neural autoregressive flows.
\newblock In {\em International Conference on Machine Learning}.

\bibitem[Hyv{\"a}rinen and Pajunen, 1999]{hyvarinen1999nonlinear}
Hyv{\"a}rinen, A. and Pajunen, P. (1999).
\newblock Nonlinear independent component analysis: Existence and uniqueness
  results.
\newblock {\em Neural Networks}, 12(3).

\bibitem[Ian et~al., 2013]{osband13posterior}
Ian, O., Benjamin, V.~R., and Daniel, R. (2013).
\newblock (more) efficient reinforcement learning via posterior sampling.
\newblock In {\em Proceedings of the 26th International Conference on Neural
  Information Processing Systems}, USA. Curran Associates Inc.

\bibitem[Jaini et~al., 2019]{jaini2019sum}
Jaini, P., Selby, K.~A., and Yu, Y. (2019).
\newblock Sum-of-squares polynomial flow.
\newblock In {\em International Conference on Machine Learning}.

\bibitem[Kingma et~al., 2016]{kingma2016improved}
Kingma, D.~P., Salimans, T., Jozefowicz, R., Chen, X., Sutskever, I., and
  Welling, M. (2016).
\newblock Improved variational inference with inverse autoregressive flow.
\newblock In {\em Advances in Neural Information Processing Systems}.

\bibitem[Kingma et~al., 2015]{kingma15local}
Kingma, D.~P., Salimans, T., and Welling, M. (2015).
\newblock Variational dropout and the local reparameterization trick.
\newblock In Cortes, C., Lawrence, N.~D., Lee, D.~D., Sugiyama, M., and
  Garnett, R., editors, {\em Advances in Neural Information Processing Systems
  28}, pages 2575--2583. Curran Associates, Inc.

\bibitem[Krizhevsky, 2009]{cifar10}
Krizhevsky, A. (2009).
\newblock Learning multiple layers of features from tiny images.
\newblock Technical report.

\bibitem[Krueger et~al., 2017]{krueger2017bayesian}
Krueger, D., Huang, C.-W., Islam, R., Turner, R., Lacoste, A., and Courville,
  A. (2017).
\newblock Bayesian hypernetworks.
\newblock {\em arXiv preprint arXiv:1710.04759}.

\bibitem[Langford and Seeger, 2001]{langford2001bounds}
Langford, J. and Seeger, M. (2001).
\newblock Bounds for averaging classifiers.

\bibitem[{Lecun} et~al., 1998]{mnist}
{Lecun}, Y., {Bottou}, L., {Bengio}, Y., and {Haffner}, P. (1998).
\newblock Gradient-based learning applied to document recognition.
\newblock {\em Proceedings of the IEEE}, 86(11).

\bibitem[Louizos and Welling, 2016]{louizos2016structured}
Louizos, C. and Welling, M. (2016).
\newblock Structured and efficient variational deep learning with matrix
  gaussian posteriors.
\newblock In {\em International Conference on Machine Learning}, pages
  1708--1716.

\bibitem[Louizos and Welling, 2017]{louizos2017multiplicative}
Louizos, C. and Welling, M. (2017).
\newblock Multiplicative normalizing flows for variational bayesian neural
  networks.
\newblock In {\em International Conference on Machine Learning}.

\bibitem[MacKay, 1992]{mackay1992practical}
MacKay, D.~J. (1992).
\newblock A practical bayesian framework for backpropagation networks.
\newblock {\em Neural computation}, 4(3):448--472.

\bibitem[McAllester, 1999]{mcallester1999pac}
McAllester, D.~A. (1999).
\newblock Pac-bayesian model averaging.
\newblock In {\em COLT}, volume~99, pages 164--170. Citeseer.

\bibitem[McAllister et~al., 2017]{mcallister2017av}
McAllister, R., Gal, Y., Kendall, A., Van Der~Wilk, M., Shah, A., Cipolla, R.,
  and Weller, A. (2017).
\newblock Concrete problems for autonomous vehicle safety: Advantages of
  bayesian deep learning.
\newblock In {\em Proceedings of the 26th International Joint Conference on
  Artificial Intelligence}, IJCAI'17, pages 4745--4753. AAAI Press.

\bibitem[Minka et~al., 2005]{minka2005divergence}
Minka, T. et~al. (2005).
\newblock Divergence measures and message passing.
\newblock Technical report, Technical report, Microsoft Research.

\bibitem[M{\"u}ller et~al., 2018]{muller2018neural}
M{\"u}ller, T., McWilliams, B., Rousselle, F., Gross, M., and Nov{\'a}k, J.
  (2018).
\newblock Neural importance sampling.
\newblock {\em arXiv preprint arXiv:1808.03856}.

\bibitem[Neyshabur et~al., 2017]{neyshabur27exploregen}
Neyshabur, B., Bhojanapalli, S., Mcallester, D., and Srebro, N. (2017).
\newblock Exploring generalization in deep learning.
\newblock In Guyon, I., Luxburg, U.~V., Bengio, S., Wallach, H., Fergus, R.,
  Vishwanathan, S., and Garnett, R., editors, {\em Advances in Neural
  Information Processing Systems 30}, pages 5947--5956. Curran Associates, Inc.

\bibitem[Osband et~al., 2016]{osband16randomvalue}
Osband, I., Van~Roy, B., and Wen, Z. (2016).
\newblock Generalization and exploration via randomized value functions.
\newblock In {\em Proceedings of the 33rd International Conference on
  International Conference on Machine Learning - Volume 48}, ICML'16, pages
  2377--2386. JMLR.org.

\bibitem[{Pawlowski} et~al., 2017]{pawlowski17implicit}
{Pawlowski}, N., {Brock}, A., {Lee}, M. C.~H., {Rajchl}, M., and {Glocker}, B.
  (2017).
\newblock {Implicit Weight Uncertainty in Neural Networks}.
\newblock {\em arXiv e-prints}.

\bibitem[Reeb et~al., 2018]{reeb2018learning}
Reeb, D., Doerr, A., Gerwinn, S., and Rakitsch, B. (2018).
\newblock Learning gaussian processes by minimizing pac-bayesian generalization
  bounds.
\newblock In {\em Advances in Neural Information Processing Systems}, pages
  3337--3347.

\bibitem[Rezende and Mohamed, 2015]{rezende2015variational}
Rezende, D.~J. and Mohamed, S. (2015).
\newblock Variational inference with normalizing flows.
\newblock In {\em International Conference on Machine Learning}.

\bibitem[Riquelme et~al., 2018]{riquelme18showdown}
Riquelme, C., Tucker, G., and Snoek, J.~R. (2018).
\newblock Deep bayesian bandits showdown.

\bibitem[Rivasplata et~al., 2019]{rivasplata2019pac}
Rivasplata, O., Tankasali, V.~M., and Szepesvari, C. (2019).
\newblock Pac-bayes with backprop.
\newblock {\em arXiv preprint arXiv:1908.07380}.

\bibitem[Robbins and Monro, 1951]{robbins1951}
Robbins, H. and Monro, S. (1951).
\newblock A stochastic approximation method.
\newblock {\em Ann. Math. Statist.}, 22(3):400--407.

\bibitem[Simonyan and Zisserman, 2014]{simonyan2014very}
Simonyan, K. and Zisserman, A. (2014).
\newblock Very deep convolutional networks for large-scale image recognition.
\newblock {\em arXiv preprint arXiv:1409.1556}.

\bibitem[Sun et~al., 2019]{sun2019functional}
Sun, S., Zhang, G., Shi, J., and Grosse, R. (2019).
\newblock Functional variational bayesian neural networks.
\newblock {\em arXiv preprint arXiv:1903.05779}.

\bibitem[Thompson, 1933]{thompson1933likelihood}
Thompson, W.~R. (1933).
\newblock On the likelihood that one unknown probability exceeds another in
  view of the evidence of two samples.
\newblock {\em Biometrika}.

\bibitem[Touati et~al., 2018]{touati2018randomized}
Touati, A., Satija, H., Romoff, J., Pineau, J., and Vincent, P. (2018).
\newblock Randomized value functions via multiplicative normalizing flows.
\newblock {\em arXiv preprint arXiv:1806.02315}.

\bibitem[Turner and Sahani, 2011]{turner2011two}
Turner, R.~E. and Sahani, M. (2011).
\newblock Two problems with variational expectation maximisation for
  time-series models.
\newblock {\em Bayesian Time series models}, 1(3.1):3--1.

\bibitem[Van~den Oord et~al., 2016]{van2016conditional}
Van~den Oord, A., Kalchbrenner, N., Espeholt, L., Vinyals, O., Graves, A.,
  et~al. (2016).
\newblock Conditional image generation with pixelcnn decoders.
\newblock In {\em Advances in neural information processing systems}, pages
  4790--4798.

\bibitem[Villani, 2008]{villani2008optimal}
Villani, C. (2008).
\newblock {\em Optimal transport: old and new}, volume 338.
\newblock Springer Science \& Business Media.

\bibitem[Zhang et~al., 2017]{zhang2017noisy}
Zhang, G., Sun, S., Duvenaud, D., and Grosse, R. (2017).
\newblock Noisy natural gradient as variational inference.
\newblock {\em arXiv preprint arXiv:1712.02390}.

\end{thebibliography}
\bibliographystyle{apalike}

\clearpage
\appendix
\onecolumn

\section{Law of the unconscious statistician}
\label{app:lotus}

Let $(\Omega, \gF, \sP)$ be our probability space. 
Let $\vepsilon\in\R^d$ be a random variable following the (Lebesgue) density $q_0(\vepsilon)=\frac{\ddd \vepsilon_*\sP}{\ddd \mu}$ and $\vepsilon_*\sP$ being its pushforward measure, and write $\Theta=\vg_\phi(\vepsilon)\in\R^d$ 
with $q_\phi=\frac{\ddd(\vg_\phi\circ\vepsilon)_*\sP}{\ddd\mu}$ being its density and $(\vg_\phi\circ\vepsilon)_*\sP$ being its pushforward measure,  
and $A=\hat{R}_D(\Theta)+\log q_\phi(\Theta)\in\R$.
Then 
\begin{align*}
\E[A] &= \int_{\R^d} A \,\,\ddd(\vg_\phi\circ\vepsilon)_*\sP 
&&= \int_{\R^d} \left(\hat{R}_D(\Theta) + \log q_\phi(\Theta) \right) q_\phi(\Theta) \,\ddd\Theta 
\\
&= \int_{\R^d} A\circ\Theta \,\,\ddd\vepsilon_*\sP
&&= \int_{\R^d} \left(\hat{R}_D(\vg_\phi(\vepsilon)) + \log q_\phi(\vg_\phi(\vepsilon)) \right) q_0(\vepsilon) \,\ddd\vepsilon \\
&{}&&=\int_{\R^d} \left(\hat{R}_{D}(\vg_\phi(\vepsilon)) + \log q_0(\vepsilon) - \log\left|\det\frac{\partial \vg_\phi(\vepsilon)}{\partial\vepsilon}\right|\right)q_0(\vepsilon)\,\ddd\vepsilon
\end{align*}

\section{Derivation and interpretation of Property~\ref{property:linear}}
\label{app:linear}
We first derive Property~\ref{property:linear} algebraically, and give an interpretation that can be genralized to higher dimensional tensor operation. 
Recall that we have the following givens:
\begin{itemize}
    \item Assume $\mE_{ij}\overset{\textnormal{i.i.d.}}{\sim} \gN(0,1)$ is a $n\times p$ random Gaussian matrix. 
    \item Assume $\mA\in\R^{n\times n}$ and $\mB\in\R^{p\times p}$. 
    \item $\mS\in\R^{n\times p}_{>0}$.
    \item $\mM\in\R^{n\times p}$.
\end{itemize}

If we rescale $\mE$ elementwise by $\mS$ before inducing the column-wise and row-wise correlation, we have: (superscript is Hadamard power)
\begin{align*}
    \Sigma:=\Var(\textnormal{vec}(\mM + \mA(\mE\circ\mS)\mB)) 
    &=\Var( (\mB^\top\otimes\mA)\textnormal{vec}(\mE\circ\mS))\\
    &=(\mB^\top\otimes\mA) \diag(\textnormal{vec}(\mS^2)) (\mB^\top\otimes\mA)^\top \\
    &=(\mB^\top\otimes\mA) \diag(\textnormal{vec}(\mS^2)) (\mB\otimes\mA^\top)
\end{align*}

If $\mS$ is a matrix of ones, the RHS equals $(\mB^\top\otimes\mA) (\mB\otimes\mA^\top)= (\mB^\top\mB)\otimes(\mA\mA^\top)$, which is the covariance of the matrix normal. 

Generally, $\mS$ might not be a matrix of ones.
But we can still compute the determinant and trace of the covariance matrix (useful in computing KL):

$$\det(\Sigma)=\det(\mA)^{2p}\det(\mB)^{2n}\prod_{ij}\mS_{ij}^2$$
\begin{align*}
    \textnormal{Tr}(\Sigma)
    &=\sum_i \Sigma_{ii} \\
    &=\sum_i \sum_{j} (\mB^\top\otimes\mA)_{ij}\textnormal{vec}(\mS^2)_j (\mB\otimes\mA^\top)_{ji}\\
    &=\sum_i \sum_{j} (\mB^{2\top}\otimes\mA^2)_{ij}\textnormal{vec}(\mS^2)_j\\
    &=\sum_i\left( (\mB^{2\top}\otimes\mA^2)\textnormal{vec}(\mS^2) \right)_i\\
    &=\sum_{ij}\left(\mA^2\mS^2\mB^2\right)_{ij}
\end{align*}

\paragraph{Interpretation of the determinant and trace.}
The determinant measures the change in volume due to the linear map. 
Since each operation (elementwise multiplication with $\mS$, left-multiplication with $\mA$, and right-multiplication with $\mB$) is an invertible map, the determinant of the composition is a product of determinants. 
After elementwise multiplication with $\mS$ (hence $\prod \mS_{ij}$)\footnote{The power 2 comes from the fact that we are looking at the determinant of the covariance. Direct computation of the likelihood involves $\frac{1}{2}\log\det(\Sigma)$, which is equivalent to the log-determinant of the invertible map.}, we apply the same linear map $\mA$ to the columns of $(\mE\circ\mS)$; in vector form, this corresponds to left-multiplication with a block diagonal of $p$ $\mA$'s, hence $\det(\mA)^{p}$. 
The same reasoning explains $\det(\mB)^n$.

The trace of the covariance can be written as $\Tr(\Sigma)=\sum_{ij}\Var(\mW_{ij})$, i.e. the sum of marginal variances. 
Each of the $\mW_{ij}$ is a linear combination of the entries of $\mE$, which have unit variance and are uncorrelated, so by the additive property of variance of sum of uncorrelated random variables and the quadratic scaling property of variance, $\Var(\mW_{ij})=A_{i:}^2\mS^2\mB_{:j}^2$.

\section{Connection to triangular maps.} 
\label{app:tri_map}
Much of the recent work on normalizing flows has been dedicated to inverse autoregressive transformations~\citep{kingma2016improved,huang2018neural,muller2018neural,de2019block,jaini2019sum}, as they are general enough to induce any density function~\citep{hyvarinen1999nonlinear,Bogachev_2005,villani2008optimal}. 
When such transformations are used for $\vg_A$ and $\vg_B$, the overall transformation $\mG$ is also a triangle map, since $\mG(\mE)_{ij}$ depends on $\mE_{i'j'}$ for $i'\leq i$ and $j'\leq j$. 
Such a function has some ``blind spots'' similar to the ones discovered by~\citet{van2016conditional}. 
One avenue for improvement is to design a transformation that increase the connectivity. 
Another avenue for improvement is to condition each (row-wise or column-wise) transformation on a learnable embedding of the row/column, such that each row/column is transformed by a slightly different function than another \footnote{We try this idea in the preliminary stage of the project, but find it harder to optimize. This is potentially due to the extra parameters that have to be learned.}.

\section{Tail bound of empirical KL}
\label{app:bound_kl}

We begin with some preliminaries and lemmas in Section~\ref{app:basics}, and prove the main result in Section~\ref{app:proof_kl}.

\subsection{Basic tail bounds and Bernstein inequality}
\label{app:basics}
The tools developed in this section is to translate the coefficients (such as variance) of sub-Gaussian random variables and sub-exponential random variables. 
We start with the definition of sub-Gaussians:
\begin{definition}
We write $X\sim\subN(L^2)$ if $X$ is a random variable satisfying 
$$\sP(|X|>t)\leq 2\exp\left(-\frac{t^2}{2L^2}\right)$$
\end{definition}

We write $\Gamma(\cdot)$ as the Gamma function: $\Gamma(z)=\int_0^\infty e^{-u}u^{z-1}\ddd u$.
Note that for positive integers $z$, $\Gamma(z)=(z-1)!$. 
The following lemma gives an upper-bound on the moments of a sub-Gaussian. 
\begin{lemma}
\label{lem:subgauss_moment}
For $X\sim\subN(L^2)$, for any integer $p\geq1$, $\E[|X|^p]\leq(2L^2)^{p/2}p\Gamma(p/2)$.
\end{lemma}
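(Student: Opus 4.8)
The plan is to compute the moment directly from the tail bound using the standard layer-cake (integral) representation of expectation for a nonnegative random variable. Since $|X|^p \geq 0$, I would first write
\begin{align*}
\E[|X|^p] = \int_0^\infty \sP(|X|^p > t)\,\ddd t = \int_0^\infty p\, t^{p-1}\, \sP(|X| > t)\,\ddd t,
\end{align*}
where the second equality comes from the substitution relating the tail of $|X|^p$ to that of $|X|$.

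Next I would substitute the defining sub-Gaussian tail estimate $\sP(|X|>t)\leq 2\exp(-t^2/(2L^2))$ into the integrand, yielding $\E[|X|^p] \leq 2p\int_0^\infty t^{p-1}\exp(-t^2/(2L^2))\,\ddd t$. The remaining task is then purely to evaluate this Gaussian-type integral in closed form in terms of the Gamma function.

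To do so, I would apply the change of variables $u = t^2/(2L^2)$, so that $t = \sqrt{2L^2}\,\sqrt{u}$ and $\ddd t = (L/\sqrt{2})\,u^{-1/2}\,\ddd u$. This converts the integral into $2^{p/2-1}L^p\int_0^\infty u^{p/2-1}e^{-u}\,\ddd u = 2^{p/2-1}L^p\,\Gamma(p/2)$ by the very definition of $\Gamma$. Multiplying by the leading factor $2p$ gives $\E[|X|^p] \leq p\,2^{p/2}L^p\,\Gamma(p/2) = (2L^2)^{p/2}\,p\,\Gamma(p/2)$, which is exactly the claimed bound.

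The argument is essentially a routine computation, so I do not expect a genuine obstacle beyond careful bookkeeping of constants in the change of variables; the one point that warrants attention is the exponent arithmetic (verifying that $(\sqrt{2}L)^{p-1}\cdot L/\sqrt{2} = 2^{p/2-1}L^p$) so that the final constant collapses cleanly into the factor $(2L^2)^{p/2}$.
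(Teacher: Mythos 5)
Your proposal is correct and follows exactly the paper's own argument: the layer-cake identity $\E[|X|^p]=\int_0^\infty p\,t^{p-1}\,\sP(|X|>t)\,\ddd t$, insertion of the sub-Gaussian tail bound, and the substitution $u=t^2/(2L^2)$ reducing the integral to $\Gamma(p/2)$. Your bookkeeping of constants, including the check $(\sqrt{2}L)^{p-1}\cdot L/\sqrt{2}=2^{p/2-1}L^p$, is also accurate.
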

\begin{proof}
Since $|X|^p$ is non-negative, similar to Lemma~\ref{lem:subexp_moment}, we have
\begin{align*}
\E[|X|^p]&= \int_{0}^\infty\sP(|X|^p\geq s)\ddd s=\int_0^\infty\sP(|X|\geq t)pt^{p-1} \ddd t \\
&\leq 2p\int_0^\infty e^{-{t^2}/{2L^2}}t^{p-1} \ddd t 
=\leq p(2L^2)^{p/2}\int_0^\infty e^{-u}u^{p/2-1}\ddd u = p(2L^2)^{p/2}\Gamma(p/2)
\end{align*}
where we let $s=t^p$ and $u=t^2/2L^2$. 
\end{proof}

The following definition is the main tool for translating the coefficients. 
\begin{definition}
Let $X$ be a random variable.
For integer $k\geq1$, define the $\psi_k$-Orlicz norm as
$$||X||_{\psi_k}:=\inf\{t>0:\E[\exp(|X|^k/t^k)]\leq2\}$$
i.e, the smallest constant $t>0$ for which the super-exponential moment of $X^k/t^k$ is bounded by 2.
The Orlicz norm is infinity if there's no finite $t$ for which $\E[\exp(|X|^k/t^k)]$ exists. 
\end{definition}
It is easy to verify that the Orlicz norm is indeed a norm. 
We call $||\cdot||_{\psi_2}$ the sub-Gaussian norm, and $||\cdot||_{\psi_1}$ the sub-exponential norm. 
Note that $||X^2||_{\psi_1} = ||X||_{\psi_2}^2$.

The following lemma upper bounds the sub-Gaussian norm by its variance. 
\begin{lemma}
\label{lem:subgauss_norm}
If $X\sim\subN(L^2)$, $||X||_{\psi_2}\leq 6L^2$. 
\end{lemma}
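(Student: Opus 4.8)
The plan is to work directly from the definition of the $\psi_2$-Orlicz norm given just above, which requires controlling the exponential-square moment $\E[\exp(X^2/t^2)]$. Concretely, I would show that $\E[\exp(\lambda X^2)]\le 2$ for the choice $\lambda = 1/(6L^2)$; since $\|X\|_{\psi_2}=\inf\{t>0:\E[\exp(X^2/t^2)]\le 2\}$, this immediately certifies that $t^2=6L^2$ is admissible and hence bounds the norm. So the whole argument reduces to a single moment-generating-function estimate for the nonnegative variable $X^2$.

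First I would convert the sub-Gaussian tail of $X\sim\subN(L^2)$ into a bound on the MGF of $X^2$. Using the layer-cake identity for the nonnegative variable $X^2$ together with the increasing map $y\mapsto e^{\lambda y}$,
\[
\E[\exp(\lambda X^2)] = 1 + \int_0^\infty \lambda e^{\lambda u}\,\sP(X^2 > u)\,\ddd u .
\]
Then I would substitute the defining property $\sP(X^2>u)=\sP(|X|>\sqrt u)\le 2\exp(-u/(2L^2))$ of $X\sim\subN(L^2)$ into the integral.

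The remaining computation is a single exponential integral: for $\lambda < 1/(2L^2)$ the integrand is integrable and the integral equals $\frac{2\lambda}{1/(2L^2)-\lambda}$, so $\E[\exp(\lambda X^2)]\le 1+\frac{2\lambda}{1/(2L^2)-\lambda}$. Imposing the threshold $\le 2$ reduces to $2\lambda \le 1/(2L^2)-\lambda$, i.e. $\lambda\le 1/(6L^2)$. Taking $\lambda=1/(6L^2)$ yields $\E[\exp(X^2/(6L^2))]\le 2$, so by definition $\|X\|_{\psi_2}\le \sqrt{6L^2}=\sqrt{6}\,L$, equivalently $\|X\|_{\psi_2}^2\le 6L^2$; the squared form is precisely what later feeds the identity $\|X^2\|_{\psi_1}=\|X\|_{\psi_2}^2$ and the constant $C$ of Theorem~\ref{thm:kl_conc}.

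The only genuinely delicate point is the interplay between the convergence window $\lambda<1/(2L^2)$ and the requirement that the MGF stay below $2$: I must verify that the optimal $\lambda=1/(6L^2)$ lies strictly inside the window (it does, since $1/(6L^2)<1/(2L^2)$), so the integral is finite and the resulting bound is non-vacuous. A secondary, purely bookkeeping subtlety is reconciling the ``$6L^2$'' appearing in the statement with the units of the norm: the clean quantity the integral produces is $\E[\exp(X^2/(6L^2))]\le 2$, from which both the norm bound $\sqrt 6\,L$ and its square $6L^2$ follow, so I would flag this normalization explicitly to avoid confusion downstream.
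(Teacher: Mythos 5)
Your proof is correct and lands on exactly the paper's constant, but by a genuinely different (and more compact) route. The paper expands $\E[e^{cX^2}]$ as a power series, bounds each even moment via Lemma~\ref{lem:subgauss_moment} (giving $\E[X^{2p}]\le 2(2L^2)^p p!$), and sums the geometric series $1+2\sum_{p\ge1}(2cL^2)^p$, which equals $2$ precisely at $c=1/(6L^2)$. You instead integrate the exponential directly against the sub-Gaussian tail via the layer-cake identity, obtaining the closed form $1+\frac{2\lambda}{1/(2L^2)-\lambda}=1+\frac{4\lambda L^2}{1-2\lambda L^2}$ for $\lambda<1/(2L^2)$ --- which is in fact the very same quantity as the paper's summed series, since the moment bound in Lemma~\ref{lem:subgauss_moment} is nothing but the moment of the tail envelope $2e^{-t^2/2L^2}$; the two arguments coincide after Fubini. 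What yours buys is economy: it collapses the moment lemma plus series into a single exponential integral, rendering Lemma~\ref{lem:subgauss_moment} unnecessary for this step (in the paper that lemma serves only this proof), and it makes the convergence window $\lambda<1/(2L^2)$ and the admissibility of $\lambda=1/(6L^2)$ explicit, which the paper leaves implicit in the phrase ``the RHS converges.'' Your normalization flag is also well taken and points at a genuine wrinkle in the paper itself: both proofs establish $\E[\exp(X^2/(6L^2))]\le 2$, which by the definition of the Orlicz norm yields $||X||_{\psi_2}\le\sqrt{6}\,L$, equivalently $||X||_{\psi_2}^2\le 6L^2$, whereas the lemma as stated (and its later substitution into the constant $C$ of Theorem~\ref{thm:kl_conc}, where $6L^2$ is inserted in place of $||\bar{g}||_{\psi_2}$) writes the squared quantity where the norm belongs; your explicit bookkeeping of $\sqrt{6}\,L$ versus $6L^2$ is the correct reading and resolves that conflation.
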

\begin{proof}
By power series expansion of the exponential function,
\begin{align*}
\E[e^{cX^2}]=1+\sum_{p=1}^\infty\frac{c^p\E[X^{2p}]}{p!}
\leq1+\sum_{p=1}^\infty\frac{c^p}{p!} 2(2L^2)^pp!
=1+2\sum_{p=1}^\infty(2cL^2)^p
\end{align*}
where we used Lemma~\ref{lem:subgauss_moment} for the inequality. 
The RHS converges and is equal to $2$ if $c=1/6L^2$. 
Thus, $||X||_{\psi_2}\leq 6L^2$.
\end{proof}

The following lemma gives an upper bound on the moments of sub-exponential random variables. 
\begin{lemma}
\label{lem:subexp_moment}
If for some $C>0$, $\E[\exp(|X|/C)]\leq2$, then $\E[|X|^p]\leq 2C^pp!$.
\end{lemma}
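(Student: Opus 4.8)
The plan is to bound the $p$-th moment $\E[|X|^p]$ via the tail-integral representation, exactly as was done for the sub-Gaussian case in Lemma~\ref{lem:subgauss_moment}. First I would note that since $|X|^p$ is a non-negative random variable, we may write
\begin{align*}
\E[|X|^p] = \int_0^\infty \sP(|X|^p \geq s)\,\ddd s = \int_0^\infty \sP(|X| \geq t)\, p t^{p-1}\,\ddd t,
\end{align*}
where the second equality uses the substitution $s = t^p$, so that $\ddd s = p t^{p-1}\,\ddd t$.

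Next I would extract a tail bound from the hypothesis $\E[\exp(|X|/C)] \leq 2$. By Markov's inequality applied to the random variable $\exp(|X|/C)$, for any $t > 0$,
\begin{align*}
\sP(|X| \geq t) = \sP\!\left(\exp(|X|/C) \geq \exp(t/C)\right) \leq \frac{\E[\exp(|X|/C)]}{\exp(t/C)} \leq 2\exp(-t/C).
\end{align*}
Substituting this into the moment integral gives $\E[|X|^p] \leq 2p \int_0^\infty e^{-t/C} t^{p-1}\,\ddd t$. The remaining step is to evaluate this integral via the change of variable $u = t/C$, so that $t = Cu$ and $\ddd t = C\,\ddd u$, yielding
\begin{align*}
\E[|X|^p] \leq 2p\, C^p \int_0^\infty e^{-u} u^{p-1}\,\ddd u = 2p\, C^p\, \Gamma(p) = 2 C^p\, p\,(p-1)! = 2 C^p\, p!,
\end{align*}
using $\Gamma(p) = (p-1)!$ for positive integers $p$ and the identity $p\cdot(p-1)! = p!$.

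I expect no genuine obstacle here; the argument is routine and parallels the sub-Gaussian moment bound already established. The only points requiring minor care are getting the Markov-inequality step right (raising $|X|$ into the exponent so that the hypothesis applies directly) and keeping track of the constant so that the final power of $C$ is exactly $p$ and the combinatorial factor collapses to $p!$ rather than $\Gamma(p)$ or $p\,\Gamma(p/2)$.
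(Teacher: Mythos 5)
Your proof is correct and follows essentially the same route as the paper's: the tail-integral representation of $\E[|X|^p]$, Markov's inequality applied to $\exp(|X|/C)$ to get the tail bound $2e^{-t/C}$, and the substitution $u=t/C$ reducing the integral to $\Gamma(p)=(p-1)!$. No gaps; the bookkeeping of constants matches the paper exactly.
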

\begin{proof}
By Markov's inequality, $$\sP(|X|>t)\leq \frac{\E[\exp(|X|/C)]}{\exp(t/C)}\leq 2e^{-t/C}$$
For $p\in\sZ^+$, since $|X|^p$ is non-negative, 
\begin{align*}
\E[|X|^p] &= \int_{0}^\infty\sP(|X|^p\geq s)ds
= \int_{0}^\infty\sP(|X|\geq t)pt^{p-1}dt \\
&\leq 2p\int_{0}^\infty e^{-t/C}t^{p-1}dt =2pC^p\int_{0}^\infty e^{-u}u^{p-1} du
=2pC^p\Gamma(p)=2C^pp!
\end{align*}
where we let $s=t^p$ and $u=t/C$.
\end{proof}

Finally, we derive a concentration bound for sub-exponential random variables. 
\begin{theorem}
\label{thm:subexp_conc}
(Bernstein's inequality for sub-exponential random variables) Let $(X_i)_{i\in[n]}$ be independent real-valued random variables satisfying $\E[\exp(|X|/C)]\leq2$ for some $C>0$, with mean $\mu_X=\E[X]$, and let $\bar{X}=\frac{1}{n}\sum_{i=1}^nX_i$. 
Then, for any $\epsilon>0$, the following concentration bound holds:
$$\sP(\bar{X}-\mu_X > \epsilon) \leq \exp\left(-\frac{n\epsilon^2}{2(4C^2+C\epsilon )}\right)$$
\end{theorem}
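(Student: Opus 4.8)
The plan is to prove this Bernstein inequality by the standard Chernoff (exponential Markov) method: reduce everything to a single moment generating function (MGF) estimate for the centered summands, and then optimize the free parameter. Concretely, for $\lambda>0$ I would start from
$$\sP(\bar{X}-\mu_X>\epsilon)=\sP\Big(\sum_{i=1}^n (X_i-\mu_X)>n\epsilon\Big)\leq e^{-\lambda n\epsilon}\prod_{i=1}^n\E\big[e^{\lambda(X_i-\mu_X)}\big],$$
applying Markov's inequality to $e^{\lambda\sum_i(X_i-\mu_X)}$ and using independence to factor the expectation. Everything then hinges on a uniform bound on the per-variable centered MGF, valid because $\E[e^{|X|/C}]\le 2$ guarantees the MGF is finite for $\lambda<1/C$.

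The heart of the argument is to show that for $0\le\lambda<1/C$,
$$\E\big[e^{\lambda(X-\mu_X)}\big]\leq\exp\!\left(\frac{2C^2\lambda^2}{1-C\lambda}\right).$$
Here I would invoke Lemma~\ref{lem:subexp_moment}, which gives $\E[|X|^p]\leq 2\,p!\,C^p$ for every integer $p\ge1$. Expanding the (uncentered) MGF as a power series and bounding $\E[X^p]\le\E[|X|^p]$ termwise yields $\E[e^{\lambda X}]\leq 1+\lambda\mu_X+2\sum_{p\ge2}(\lambda C)^p=1+\lambda\mu_X+\frac{2C^2\lambda^2}{1-C\lambda}$, where the geometric series converges precisely because $\lambda C<1$. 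To recenter cleanly without spoiling the constants, I would write $\log\E[e^{\lambda(X-\mu_X)}]=-\lambda\mu_X+\log\E[e^{\lambda X}]$ and apply $\log(1+u)\le u$ (legitimate since the right-hand bound is a positive upper bound on an MGF), which cancels the $-\lambda\mu_X$ term exactly and leaves the desired $\frac{2C^2\lambda^2}{1-C\lambda}$. This is the step I expect to be the main obstacle: the naive route of bounding the central moments $\E[(X-\mu_X)^p]$ directly loses powers of $2$ through the triangle inequality and inflates the effective scale constant, failing to match the $C$ in the denominator; the $\log(1+u)\le u$ trick is what lets the variance proxy and scale come out as exactly $4C^2$ and $C$, reproducing the denominator in the statement.

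Combining the two displays gives, for all $0<\lambda<1/C$,
$$\sP(\bar{X}-\mu_X>\epsilon)\leq\exp\!\left(-\lambda n\epsilon+\frac{2nC^2\lambda^2}{1-C\lambda}\right).$$
Finally I would optimize the exponent over $\lambda$; rather than differentiate, I would simply substitute the near-optimal choice $\lambda=\frac{\epsilon}{4C^2+C\epsilon}$, which lies in $(0,1/C)$ since $4C^2+C\epsilon>C\epsilon$. A short computation using $1-C\lambda=\frac{4C^2}{4C^2+C\epsilon}$ collapses the exponent to $-\frac{n\epsilon^2}{2(4C^2+C\epsilon)}$, giving exactly the claimed bound. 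The only points needing care beyond the MGF estimate are checking that the admissible range $\lambda<1/C$ is respected by this choice and justifying the interchange of sum and expectation in the series, both of which are routine.
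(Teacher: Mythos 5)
Your proof is correct, but it takes a genuinely more self-contained route than the paper. The paper's proof of Theorem~\ref{thm:subexp_conc} is a two-line reduction: it uses Lemma~\ref{lem:subexp_moment} to verify Bernstein's moment condition, namely $\sum_{i}\E[X_i^2]\le\nu$ and $\sum_i\E[|X_i|^p]\le \frac{p!}{2}\nu c^{p-2}$ with $\nu=4nC^2$ and $c=C$, and then cites Corollary~2.11 of \citet{boucheron2013concentration}, which is precisely the Cram\'er--Chernoff argument you carry out by hand. Your derivation opens that black box: you feed the same moment bounds $\E[|X|^p]\le 2C^pp!$ from Lemma~\ref{lem:subexp_moment} directly into the power series of the MGF to obtain $\log\E[e^{\lambda(X-\mu_X)}]\le \frac{2C^2\lambda^2}{1-C\lambda}=\frac{(4C^2)\lambda^2}{2(1-C\lambda)}$, i.e.\ the same sub-gamma parameters $(4C^2,\,C)$ per variable, and then substitute the near-optimal $\lambda=\epsilon/(4C^2+C\epsilon)$, whose admissibility $C\lambda<1$ you correctly check; the algebra collapsing the exponent to $-n\epsilon^2/\bigl(2(4C^2+C\epsilon)\bigr)$ is exact. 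Your recentering step---cancelling $-\lambda\mu_X$ against the linear term of the series via $\log(1+u)\le u$, which is legitimate since $1+u\ge\E[e^{\lambda X}]>0$---is exactly what avoids the constant inflation one would incur by bounding central moments through the triangle inequality, and it reproduces the cited corollary's constants with no loss (the interchange of sum and expectation is indeed routine by dominated convergence, since $\E[e^{\lambda|X|}]<\infty$ for $\lambda C<1$). What the paper's route buys is brevity and a clean reliance on a standard reference; what yours buys is a fully self-contained appendix proof that makes visible where the $4C^2$ and $C$ in the denominator come from.
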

\begin{proof}
Let $\nu=4nC^2$ and $c=C$. 
Then by Lemma~\ref{lem:subexp_moment}, $\sum_{i=1}^n\E[X_i^2]\leq n\cdot4C^2=\nu$ and for integers $p>2$:
$\sum_{i=1}^n \E[|X_i|^p] \leq 2nC^pp! = \nu C^{p-2}p!/2 = \nu c^{p-2}p!/2$. 
Then by Corollary 2.11 of~\cite{boucheron2013concentration}, we have
$$\sP(\bar{X}-\mu_X > \epsilon)=\sP\left(\sum_{i=1}^n(X_i-\mu_X)>n\epsilon\right)\leq \exp\left(-\frac{n\epsilon^2}{2(4C^2+C\epsilon )}\right)$$
\end{proof}

\subsection{Proof of Theorem~\ref{thm:kl_conc}}
\label{app:proof_kl}
Since $\bar{g}(\vepsilon):=g(\vepsilon)-\E[g(\vepsilon)]$ is $L$-Lipschitz, according to Theorem 5.5 and 5.6 of \citet{boucheron2013concentration}, $\bar{g}\sim\subN(L^2)$.
And we have that
$$||g^2||_{\psi_1}=||g||_{\psi_2}^2=||\bar{g}+\E[g]||_{\psi_2}^2\leq\left(||\bar{g}||_{\psi_2} + \frac{\E[g]}{\sqrt{\log 2}}\right)^2$$
due to triangle inequality of the norm.
Now since $g$ is $L$-Lipschitz, its expectation can be bounded by
$$\E[g]=\E\left[\frac{1}{\sqrt{2}}||\vg(\vepsilon)-\mathbf{0}||\right]\leq L\E\left[||\vepsilon-\vg^{-1}(\mathbf{0})||\right]\leq L(\E\left[||\vepsilon||\right]+||\vg^{-1}(\mathbf{0})||) $$
Since $\vepsilon$ is standard-normally distributed, $||\vepsilon||$ follows the chi distribution with $d$ degrees of freedom, which has an expectation that can be upper-bounded using Gautschi's inequality (using Wendel's version of the upper bound):
$$\E[||\vepsilon||]=\sqrt{2}\frac{\Gamma((d+1)/2)}{\Gamma(d/2)}\leq \sqrt{2}\left(\frac{d}{2}\right)^{1/2}=\sqrt{d}$$
Combining the above and using Lemma~\ref{lem:subgauss_norm}, we have
$$||g^2||_{\psi_1}\leq\left(6L^2+\frac{L}{\sqrt{\log2}}(\sqrt{d}+||\vg^{-1}(\mathbf{0})||)\right)^2$$
Setting $C$ to be the RHS and applying Theorem~\ref{thm:subexp_conc} yield the desired result.

\section{Experimental Details}
\label{app:exp_details}
For the predictive tasks (Section~\ref{sec:clf}), we use a cosine annealing schedule for the learning rate, scaling down to 0.01 of the initial learning rate, and pretrain a deterministic network for 10 epochs using the Adam optimizer with a learning rate of 0.001, to initialize the mean of the Gaussian $q_0$, and train $q$ for 200 epochs.

\paragraph{LeNet-5 MNIST.} 
We use a linear annealing schedule of the $\beta$ coefficient (from 0 back to 1) for 50,000 iterations. 
We use the Adam optimizer with a learning rate of 0.0005. 
The result we get for K-Linear uses polyak averaging with exponential decay coefficient 0.995. 
We use the volume preserving version of RealNVP for the K-Nonlinear. 
We use the standard Gaussian prior for $p$.
\paragraph{LeNet-5 CIFAR-5}
We use the same architecture as \citet{louizos2017multiplicative} (192 convolutional kernels and 1,000 hidden units for the fully connected layers). 
We use a linear annealing schedule of the $\beta$ coefficient (from 0 back to 1) for 20,000 iterations for Diag, and no annealing for K-Linear and K-Nonlinear. 
We use the Adam optimizer with a learning rate of 0.0003, 0.0003, 0.0005 for Diag, K-Linear and K-Nonlinear, respectively. 
We use the volume preserving version of RealNVP for K-Nonlinear.
We use the standard Gaussian prior for $p$.
\paragraph{VGG-16 CIFAR-10}
We use the modified version of VGG-16 proposed by \citet{zhang2017noisy}.
We use a learning rate of 0.0005 for all experiments but K-Nonlinear in the regular setup (where we use 0.001).
We use the isotropic Gaussian prior with variance being 0.1, and set $\beta$ to be [0.5, 0.1, 0.1, 0.5] in the regular setup and [0.5, 0.1, 0.1, 0.1] in the data augmented setup for Diag, K-Diag, K-Linear, and K-Nonlinear, respectively. 
We use the volume preserving version of RealNVP for the K-Nonlinear. 
\paragraph{PAC-Bayes MLP}
We follow the same steps as \citet{dziugaite2017computing}, except we did not discretize the prior variance after tuning. 
In practice this does not affect the bound much. 
We also did not initialize the mean of $q_0$ in our setup using SGD for our experiments.
We train the stochastic network for 300 epochs, with a learning rate of 0.002. 
The bound holds with probability at least 0.965 over the choice of prior and the training set.
The $b$ and $c$ coefficients in \citet{dziugaite2017computing} are set as 100 and 0.1. 
We use the volume preserving version of IAF for the K-Nonlinear.
\paragraph{PAC-Bayes LeNet-5}
The same setup as PAC-Bayes MLP, except with polyak averaging with coefficient 0.995. 
We use the volume preserving version of IAF for the K-Nonlinear.

\paragraph{Bandit Benchmark}

All the models share the same architechture: one hidden layer with 50 units. We use the volume preserving version of RealNVP for K-NonLinear. We train models every 50 time steps for 200 training iterations using a batch-size of 512.

\begin{table}[h] \small
 \caption{
 \label{table: bandit dataset} Description of bandit problem: number of actions and number of contexts used for experiments. Comparing to \citet{riquelme18showdown} benchmark, we restrict ourself to 50000 contexts for Covertype instead of 150000 contexts. 
 }
\begin{center}
\begin{tabular}{ccc}
 \hline
 Bandit problem & number of actions & number of contexts \\
 Mushroom & 2 & 50000 \\
 Statlog & 7 & 43500 \\
 Covertype & 7 & 50000 \\
 Financial & 8 & 3713 \\
 Jester & 8 & 19181 \\
 Adult & 14 & 45222 \\
 \hline
 \end{tabular}
 \end{center}
 \end{table}
 
 \begin{table} [h]
    \centering
    \small
    \caption{Additional results: Cumulative regret incurred by different algorithms on the bandit benchmarks described in \citet{riquelme18showdown}. Values reported are the mean over 5 independent trials with standard error of the mean, normalized with respect to the performance of the uniform policy. We use the same hyperparameters for different algorithms without any finetuning: learning rate = 0.0001 and 100 training epochs.}
    \label{tab:bandits_without_tuning}
    \resizebox{1.0\textwidth}{!}{%
    \begin{tabular}{c|ccccc}
    \toprule
    {\bf Bandit} & {\bf SGD}  & {\bf Diag} & {\bf K-Diag} & {\bf K-Linear} & {\bf K-Nonlinear} \\
    \midrule
    {\bf Mushroom} & {\bf 1.82 $\pm$ 0.53} & 2.12 $\pm$ 0.13  &
     2.12 $\pm$ 0.47 & 2.20 $\pm$ 0.18 & 3.55 $\pm$ 0.74 \\
    {\bf Statlog} & 3.31 $\pm$ 1.27 & 4.12 $\pm$ 0.16 &
    {\bf 1.23 $\pm$ 0.06} & 3.49 $\pm$ 0.16 &  1.33 $\pm$ 0.02 \\
    {\bf Covertype} & 31.70 $\pm$ 0.17 & 34.64 $\pm$ 0.16 & 30.82 $\pm$ 0.21  & 32.84 $\pm$ 0.16 & {\bf 29.24 $\pm$ 0.10}\\
    {\bf Financial} & 20.31 $\pm$ 2.24  & 27.60 $\pm$ 1.37 &
    {\bf 11.83 $\pm$ 0.67}  & 25.10 $\pm$ 1.10  & 13.10 $\pm$ 0.36\\
    {\bf Jester} & 56.90 $\pm$ 1.20 & 59.26 $\pm$ 1.38 & 57.22 $\pm$ 1.35 &
    58.16 $\pm$ 1.20 & {\bf 56.88 $\pm$ 1.91}\\
    {\bf Adult} & 78.70 $\pm$ 0.46 & 79.16 $\pm$ 0.19 &
    77.30 $\pm$ 0.18  & {\bf 76.98 $\pm$ 0.04} & 77.83 $\pm$ 0.19\\
    \bottomrule
    \end{tabular}
    }
\end{table}

\newpage

\begin{figure}
    \centering
    \includegraphics[width=0.48\linewidth]{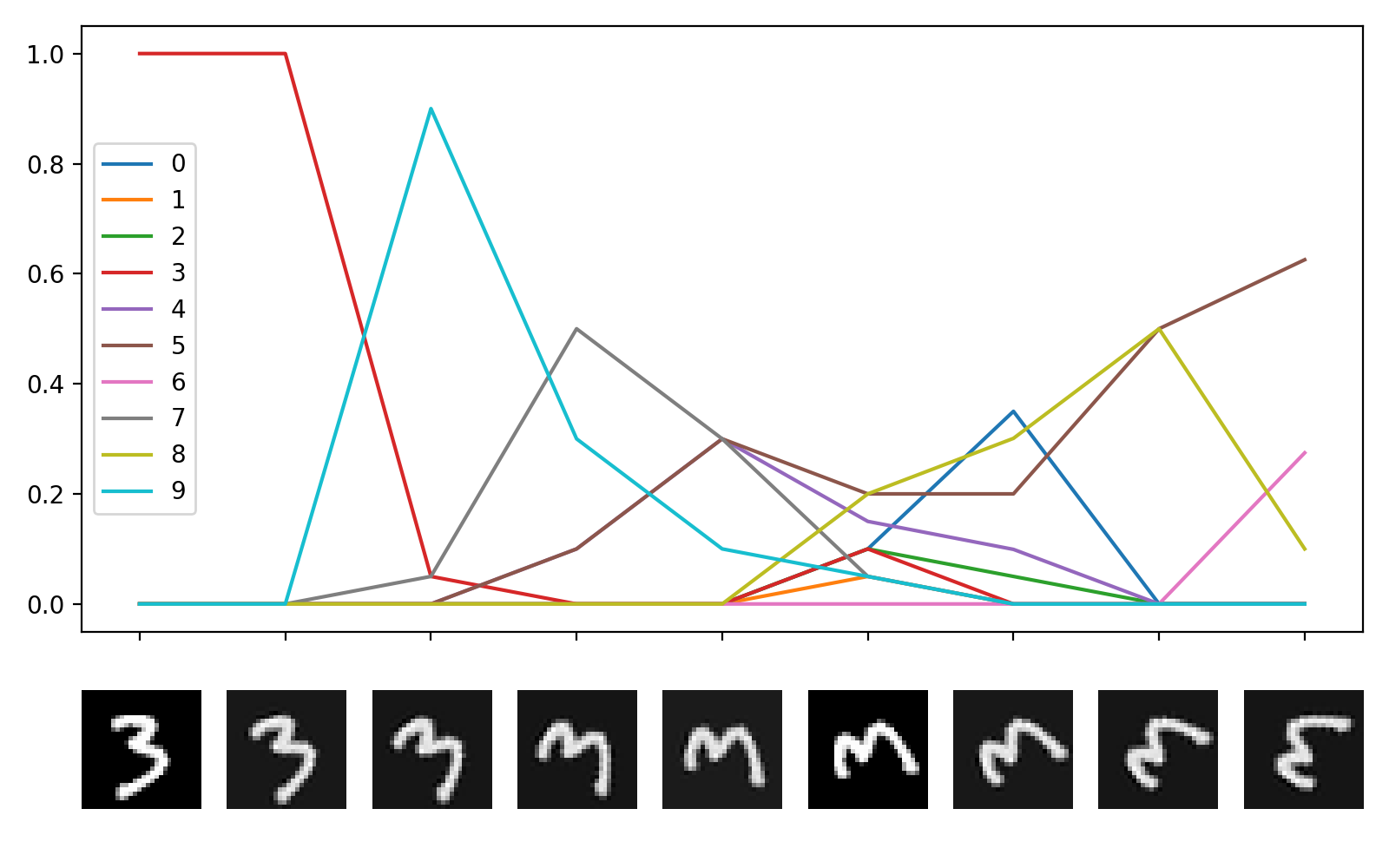}
    \hfill
    \includegraphics[width=0.48\linewidth]{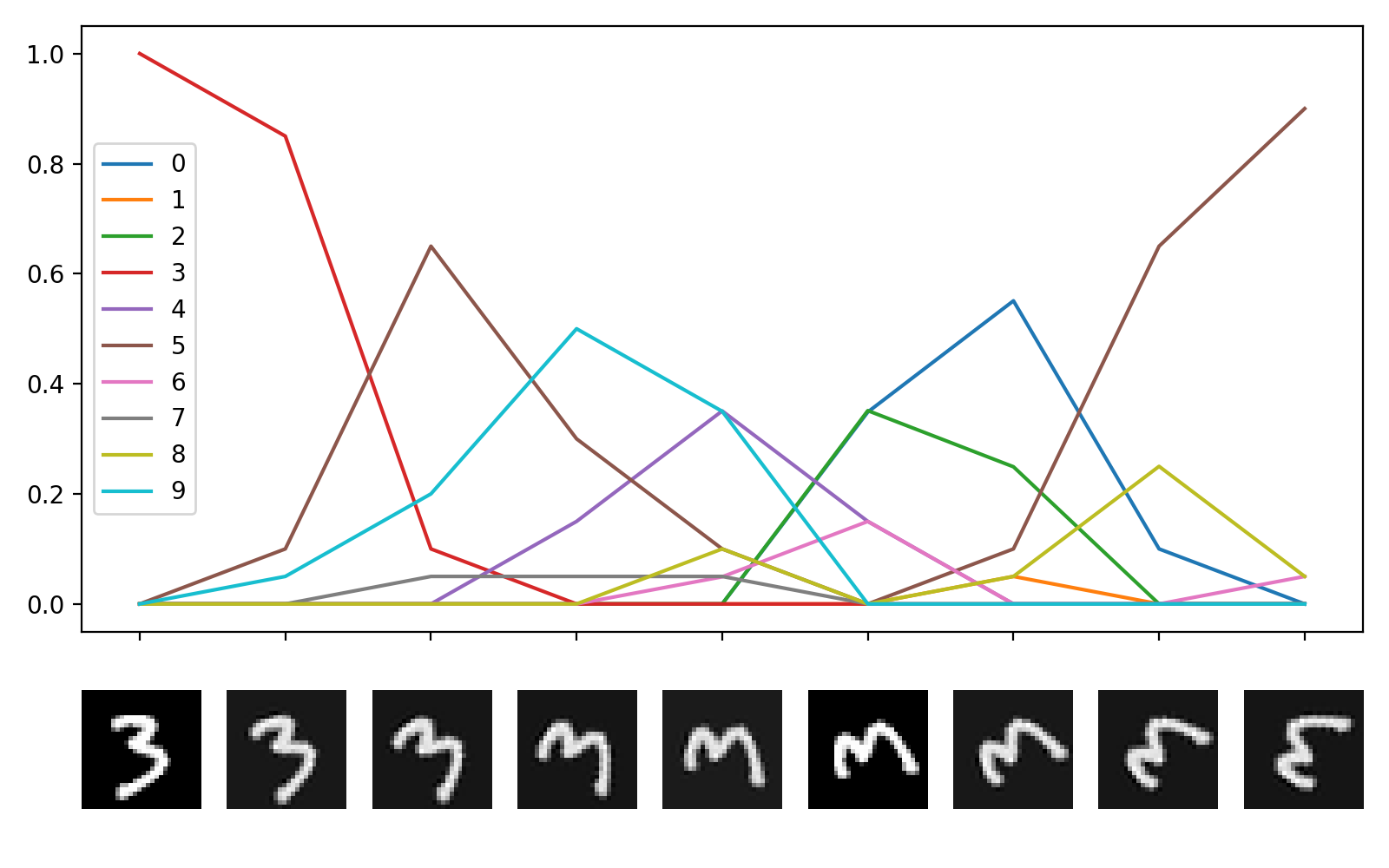}
    \includegraphics[width=0.48\linewidth]{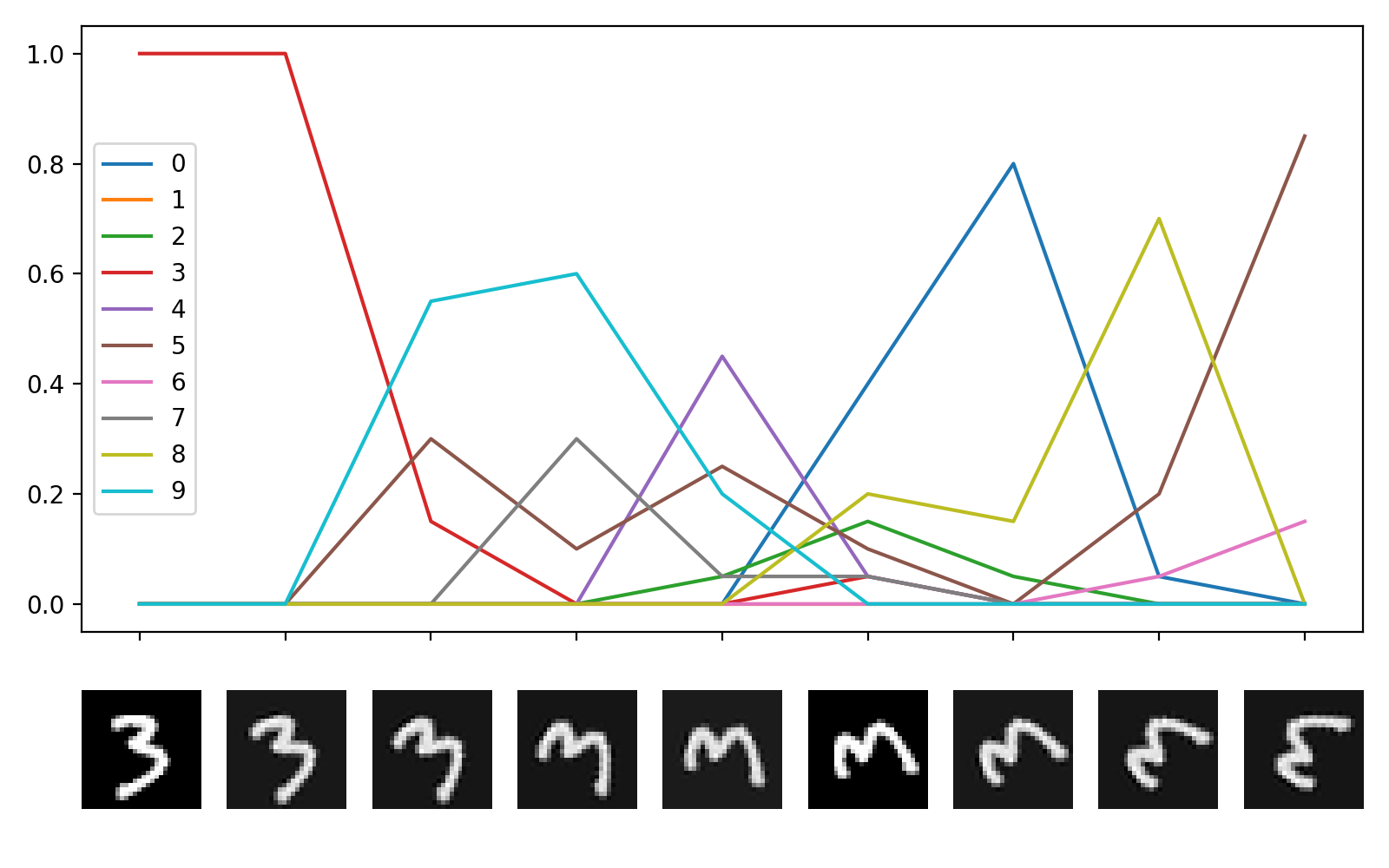}
    \hfill
    \includegraphics[width=0.48\linewidth]{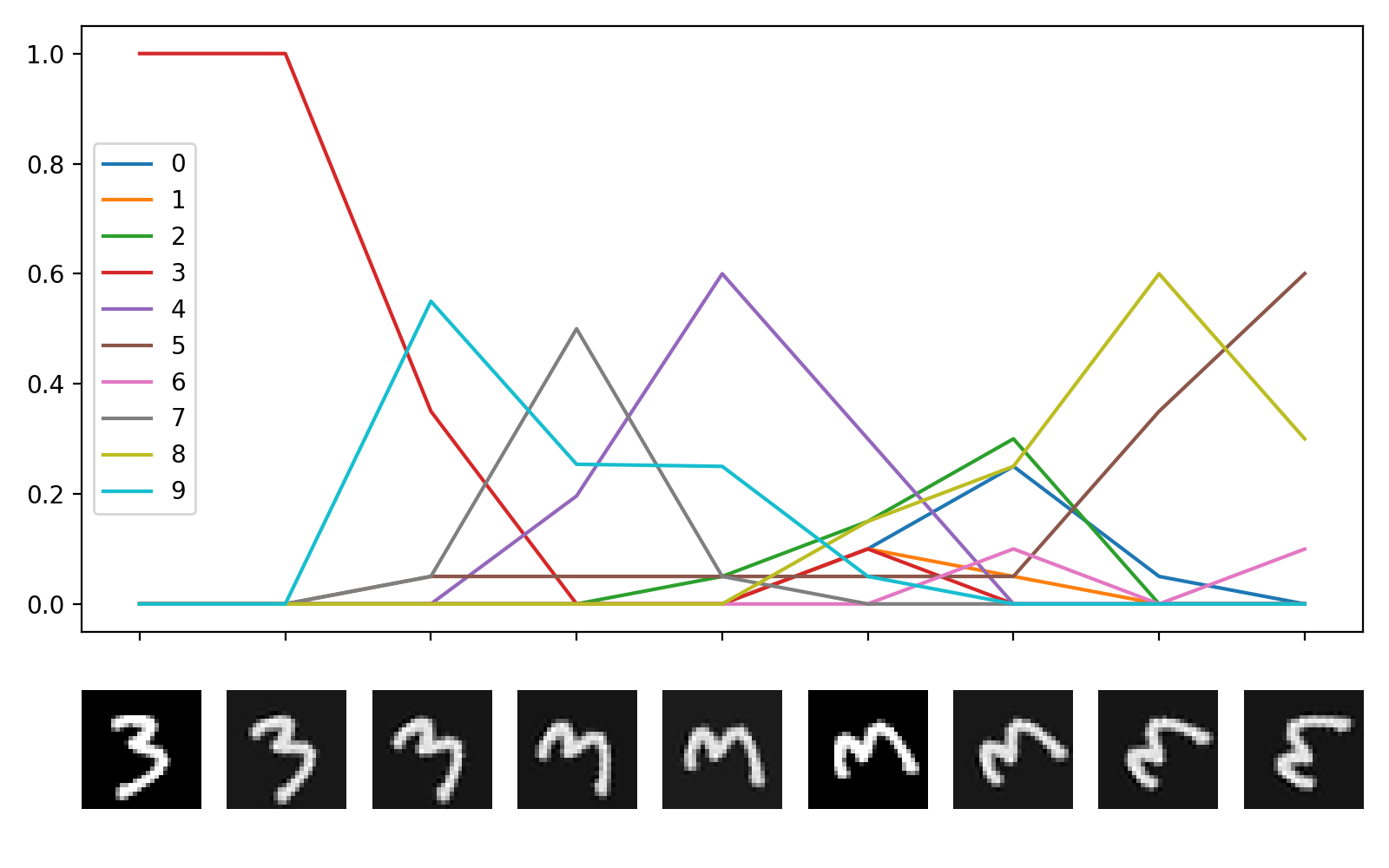}
    
    \vspace{2cm}
    
    \includegraphics[width=0.48\linewidth]{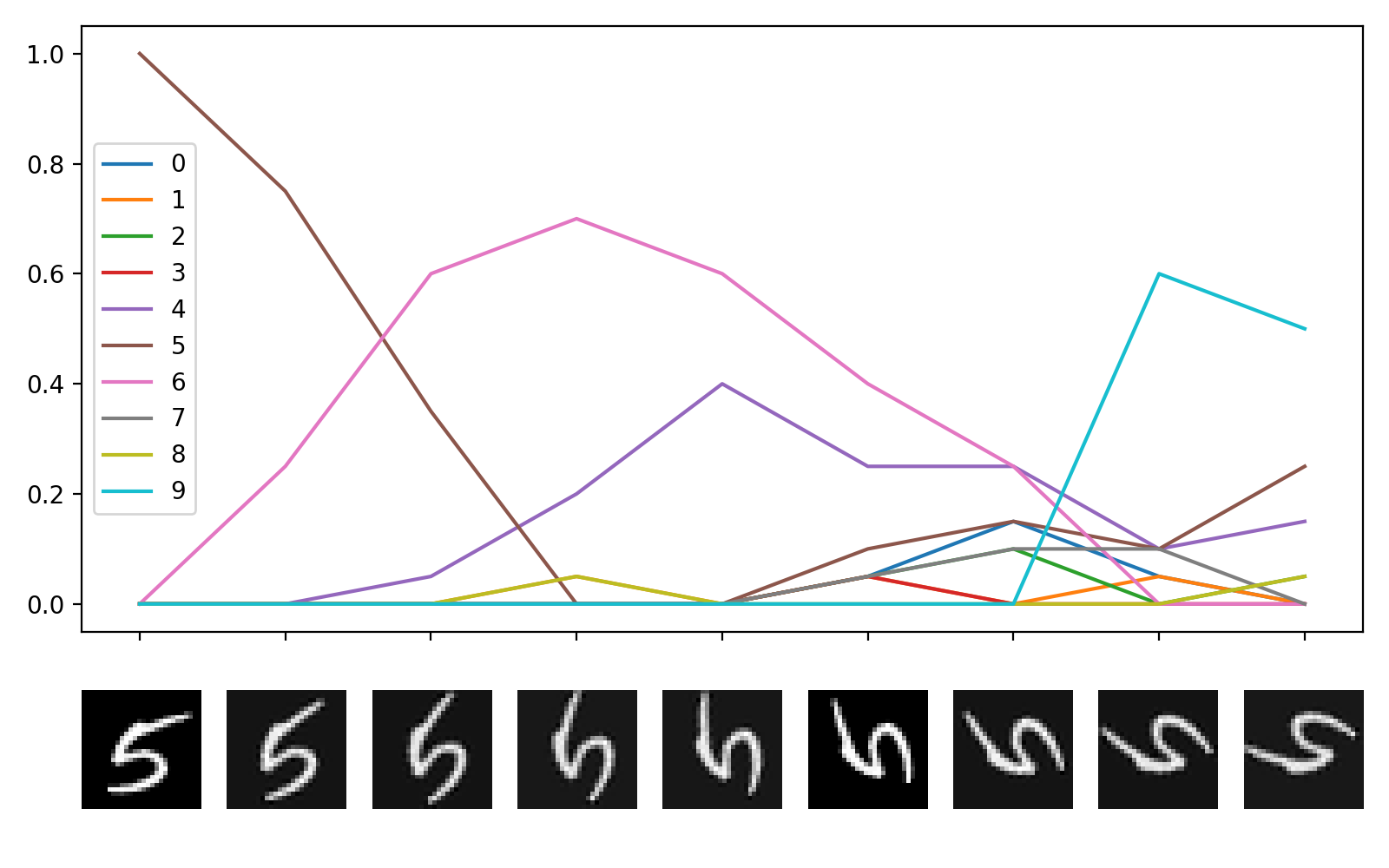}
    \hfill
    \includegraphics[width=0.48\linewidth]{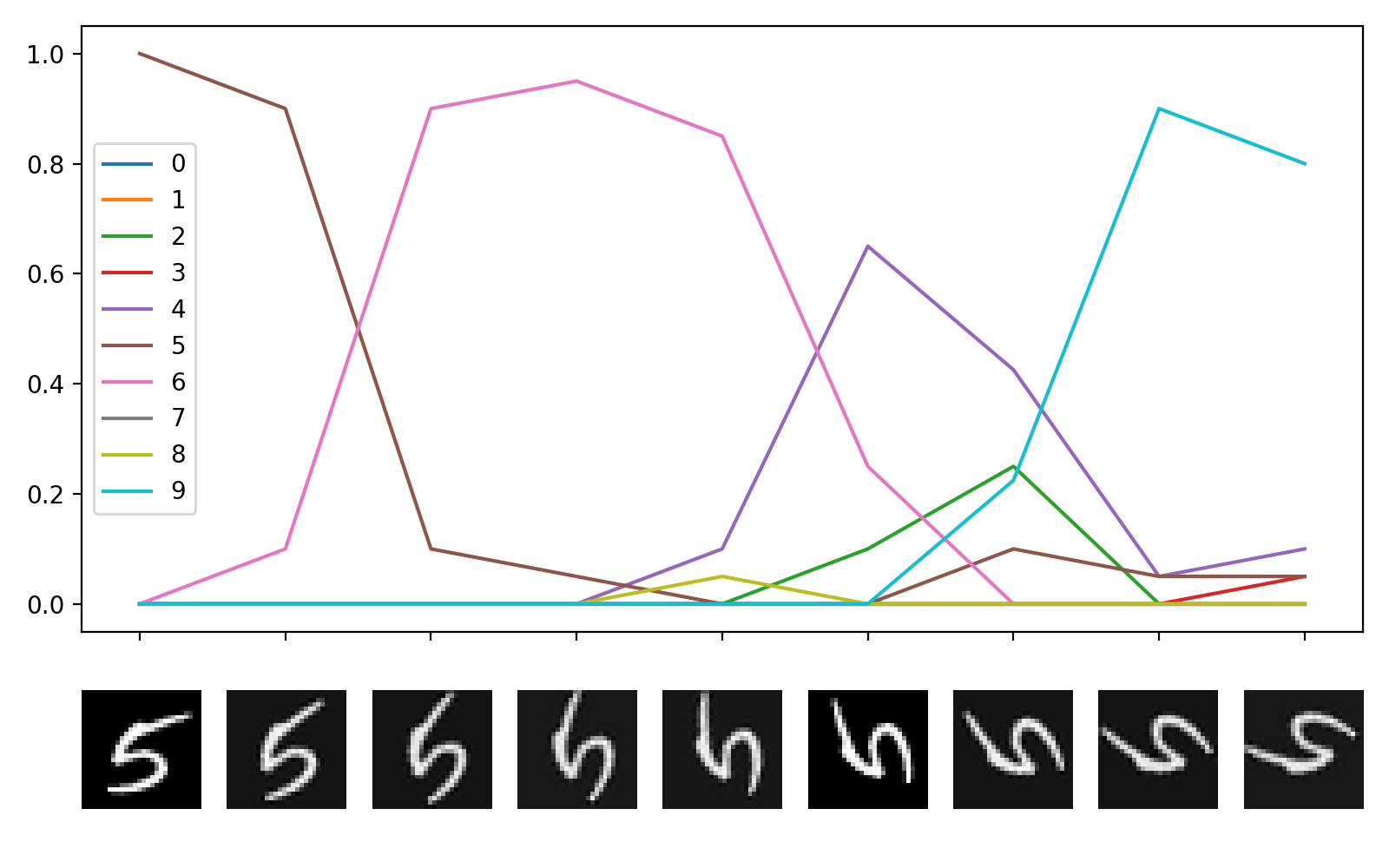}
    \includegraphics[width=0.48\linewidth]{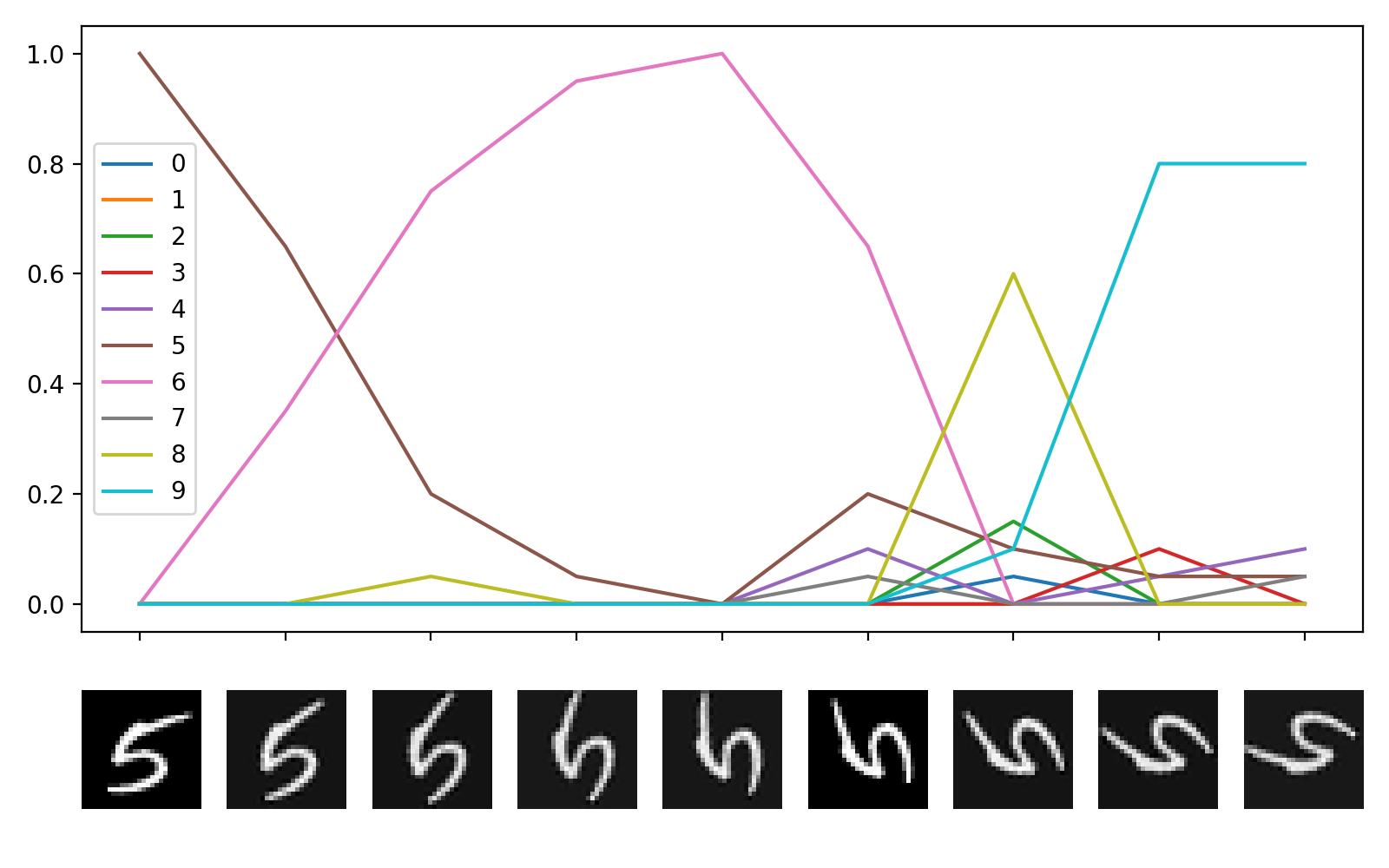}
    \hfill
    \includegraphics[width=0.48\linewidth]{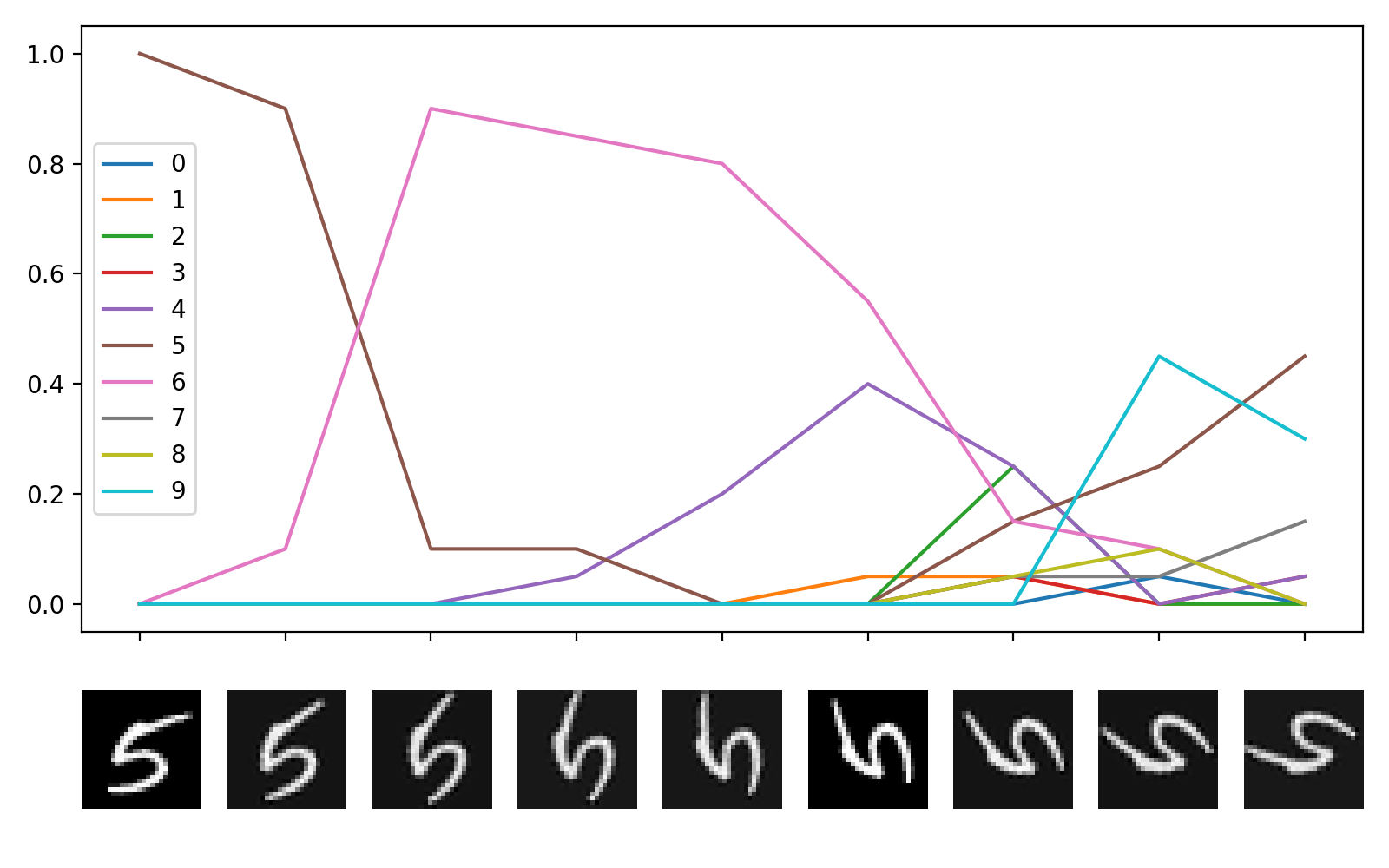}
    \caption{Posterior predictive (with 20 samples) on rotated MNIST digit 3 and 5. Top left: Diag; top right: K-Diag; bottom left: K-Linear; bottom right: K-Nonlinear.}
    \label{fig:my_label}
\end{figure}

\end{document}